\newtheorem{theorem}{Theorem}
\theoremstyle{definition}
\newtheorem{definition}{Definition}
\pgfplotsset{compat=1.15}
\pgfplotsset{
    select coords between index/.style 2 args={
        x filter/.code={
            \ifnum\coordindex<#1\fi
            \ifnum\coordindex>#2\fi
        }
    },
    rshift/.style={
        xshift=\pgfkeysvalueof{/pgfplots/rshift scale}
    },
    lshift/.style={
        xshift=-\pgfkeysvalueof{/pgfplots/lshift scale}
    },
    rshift2/.style={
        xshift=\pgfkeysvalueof{/pgfplots/rshift2 scale}
    },
    lshift2/.style={
        xshift=-\pgfkeysvalueof{/pgfplots/lshift2 scale}
    },
    rshift2 scale/.initial=0.4em,
    rshift scale/.initial=0.2em,
    lshift scale/.initial=0.2em,
    lshift2 scale/.initial=0.4em,
}
\newcommand{\sectionClearPage}[0]{}
\newcommand{\Aron}[1]{\todo[color=yellow!5,linecolor=black!50]{\textbf{Aron}: #1}}
\newcommand{\iAron}[1]{\todo[color=yellow!5,linecolor=black!50,inline]{\textbf{Aron}: #1}}
\newcommand{\ad}[1]{\todo[color=green!5,linecolor=black!50]{\textbf{AD}: #1}}
\newcommand{\calA}[0]{\ensuremath{\mathcal{A}}}
\newcommand{\calC}[0]{\ensuremath{\mathcal{C}}}
\newcommand{\calE}[0]{\ensuremath{\mathcal{E}}}
\newcommand{\calL}[0]{\ensuremath{\mathcal{L}}}
\newcommand{\calM}[0]{\ensuremath{\mathcal{M}}}
\newcommand{\calN}[0]{\ensuremath{\mathcal{N}}}
\newcommand{\calS}[0]{\ensuremath{\mathcal{S}}}
\newcommand{\calT}[0]{\ensuremath{\mathcal{T}}}
\newcommand{\calX}[0]{\ensuremath{\mathcal{X}}}
\newcommand{\calV}[0]{\ensuremath{\mathcal{V}}}
\newcommand{\MT}[0]{\ensuremath{T}}
\newcommand{\DMT}[2]{\ensuremath{D({#1}, {#2})}}
\newcommand{\argmax}[0]{\ensuremath{\operatorname{argmax}}}
\newcommand{\argmin}[0]{\ensuremath{\operatorname{argmin}}}
\newenvironment{revision}{\color{blue}}{\color{black}}
\renewenvironment{revision}{}{}
\newif\ifExtendedVersion
\title{Minimizing Energy Use of Mixed-Fleet Public Transit for Fixed-Route Service}
\author{
Amutheezan Sivagnanam\textsuperscript{\rm 1},
Afiya Ayman\textsuperscript{\rm 1},
Michael Wilbur\textsuperscript{\rm 2},\\
Philip Pugliese\textsuperscript{\rm 3},
Abhishek Dubey\textsuperscript{\rm 2},
Aron Laszka\textsuperscript{\rm 1}\\
}
\begin{document}

\setlength{\marginparwidth}{1.5cm}
\pagestyle{plain}

\maketitle

\ifExtendedVersion
\begin{center}
Accepted for publication in the proceedings of the 35th AAAI Conference on Artificial Intelligence (AAAI-21).
\end{center}\vspace{1em}
\fi

\begin{abstract}
Affordable public transit services are crucial for communities since they enable residents to access employment, education, and other services. 
Unfortunately, transit services that provide wide coverage tend to suffer from relatively low utilization, which results in high fuel usage per passenger per mile, leading to high operating costs and environmental impact.
Electric vehicles (EVs) can reduce energy costs and environmental impact, but most public transit agencies have to employ them in combination with conventional, internal-combustion engine vehicles due to the high upfront costs of EVs. 
To make the best use of such a mixed fleet of vehicles, transit agencies need to optimize route assignments and charging schedules, which presents a challenging problem for large transit networks. 
We introduce a novel problem formulation to minimize fuel and electricity use by assigning vehicles to transit trips and scheduling them for charging, while serving an existing fixed-route transit schedule. 
We present an integer program for optimal assignment and scheduling, and we propose polynomial-time heuristic and meta-heuristic algorithms for larger networks. 
We evaluate our algorithms on the public transit service of \begin{revision}Chattanooga, TN\end{revision} using operational data collected from transit vehicles. 
Our results show that the proposed algorithms are scalable and can reduce energy use and, hence, environmental impact and operational costs. 
For \begin{revision}Chattanooga\end{revision}, the proposed algorithms can save \begin{revision}\$145,635\end{revision} in energy costs and \begin{revision}576.7\end{revision} metric tons of CO$_2$ emission annually.
\end{abstract}

\section{Introduction}
\label{sec:intro}

Affordable public transit services are the backbones of many communities, providing diverse groups of people with access to employment, education, and other services.
Affordable transit services are especially important in low-income neighborhoods where residents might not be able to afford personal vehicles.
However, transit services that provide wide and equitable coverage tend to suffer from low utilization---compared to concentrating service into a few high-density areas---which leads to higher fuel usage per passenger per mile.
This in turn results in higher operating costs, which  threatens affordability---a problem that has recently been exacerbated by the COVID-19 pandemic.


Further, low utilization also leads to high environmental impact per passenger per mile.
In the U.S.,
28\% of total energy use is for transportation
~\cite{eia}. 
While public transit services can be very energy-efficient compared to personal vehicles, their environmental impact is significant nonetheless.
For example, bus transit services in the U.S. may be responsible for up to 21.1 million metric tons of CO$_2$ \cite{ghgemissions} emission every year.
\Aron{this \url{https://nepis.epa.gov/Exe/ZyPDF.cgi?Dockey=P100WUHR.pdf} reference says 19.7 million tons, which one is right?}


Electric vehicles (EVs) can have much lower operating costs and lower environmental impact during operation than comparable internal combustion engine vehicles (ICEVs), especially in urban areas.
Unfortunately, EVs are also much more expensive than ICEVs: typically, diesel transit buses cost less than \$500K, while electric ones cost more than \$700K, or close to around \$1M with charging infrastructure. 
As a result, many public transit agencies can afford only mixed fleets of transit vehicles, which may consist of EVs, hybrid vehicles (HEVs), and ICEVs. 

Public transit agencies that operate such mixed fleets of vehicles face a challenging optimization problem.
First, they need to decide which vehicles are assigned to serving which transit trips.
Since the advantage of EVs over ICEVs varies depending on the route and time of day (e.g.,  advantage of EVs is higher in slower traffic with frequent stops, and lower on highways), the assignment can have a significant impact on energy use and, hence, on costs and environmental impact.  
Second, transit agencies need to schedule when to charge electric vehicles because EVs have limited battery capacity and driving range, and may need to be recharged during the day between serving transit trips. 
Since agencies often have limited charging capabilities (e.g., limited number of charging poles, or limited maximum power to avoid high peak loads on the electric grid), charging constraints can significantly increase the complexity of the assignment and scheduling problem. 

\textbf{Contributions:} While an increasing number of transit agencies face these problems, there exist no practical solutions to the best of our knowledge. 
In this paper, we present a novel problem formulation and algorithms for assigning a mixed fleet of transit vehicles to trips and for scheduling the charging of electric vehicles.
We developed this problem formulation in collaboration with \begin{revision}the Chattanooga Area Regional Transportation Authority (CARTA), the public transit agency of Chattanooga, TN\end{revision}, which operates a fleet of EVs, HEVs, and ICEVs.
To solve the problem, we introduce an integer program as well as greedy and simulated annealing algorithms.
We evaluate these algorithms using operational data collected from \begin{revision}CARTA\end{revision} (e.g., vehicle energy consumption data, transit routes and schedules) and from other sources (e.g., elevation and street maps). 
Based on our numerical results, the proposed algorithms can reduce
energy costs by up to \begin{revision} \$145,635 \end{revision} and CO$_2$ emissions by
up to \begin{revision} 576.7 \end{revision} metric tons annually for~\begin{revision}CARTA\end{revision}. 
We will make all data and the implementation of our algorithms publicly available (also attached as appendix to this submission).

Our problem formulation applies to a wide range of public transit agencies that operate fixed-route services.
Our objective is to minimize energy consumption (i.e., fuel and electricity use), which leads to lower operating costs and environmental impact---as demonstrated by our numerical results. 
Our problem formulation considers assigning and scheduling for a single day (it may be applied to any number of consecutive days one-by-one), and allows any physically possible re-assignment during the day.
Our formulation also allows capturing additional constraints on charging; for example,~\begin{revision}CARTA\end{revision} aims to charge only one vehicle at a time to avoid demand charges from the electric utility. 

\textbf{Organization:} 
In \cref{sec:model}, we describe our model and problem formulation.
In \cref{sec:algo}, we introduce a mixed-integer program as well as greedy and simulated annealing algorithms.
In \cref{sec:numerical}, we provide numerical results based on real-world data from~\begin{revision}CARTA\end{revision}.
In \cref{sec:related}, we present a brief overview of related work.
Finally, in \cref{sec:concl}, we summarize our findings and outline future work.
\Aron{check if organization is accurate once the paper is finished}

\sectionClearPage
\section{Transit Model and Problem Formulation}
\label{sec:model}


\paragraph{Vehicles}
We consider a transit agency that operates a set of \emph{buses}~$\mathcal{V}$.
Note that we will use the terms \emph{bus} and \emph{vehicle} interchangeably.
Each bus $v \in \mathcal{V}$ belongs to a \emph{vehicle model}~$M_v \in \mathcal{M}$, where $\mathcal{M}$ is the set of all vehicle models in operation.
We divide the set of vehicle models $\mathcal{M}$ into two disjoint subsets: liquid-fuel models $\mathcal{M}^\text{gas}$ (e.g., diesel, hybrid), and electric models $\mathcal{M}^\text{elec}$.
Based on discussions with~\begin{revision}CARTA\end{revision}, we assume that vehicles belonging to a liquid-fuel model 
can operate all day without refueling.
On the other hand, vehicles belonging to an electric model 
have limited battery capacity, which might not be enough for a whole day.
For each electric vehicle model $m \in \mathcal{M}^\text{elec}$, 
we let~$C_m$ denote the \emph{battery capacity} of vehicles of model~$m$.

\paragraph{Locations} 
Locations 
$\mathcal{L}$ include 
bus stops, garages, and charging stations in the transit network. 

\paragraph{Trips}
During the day, the agency has to serve a given set of \emph{transit trips}~$\mathcal{T}$ using its buses. 
Based on discussions with~\begin{revision}CARTA\end{revision}, we assume that locations and time schedules are fixed for every trip. 
A bus serving trip $t \in \mathcal{T}$ leaves  from trip origin $t^{\text{origin}} \in \mathcal{L}$ at time $t^{\text{start}}$ and arrives at destination $t^{\text{destination}} \in \mathcal{L}$ at time $t^{\text{end}}$. 
Between $t^{\text{origin}}$ and $t^{\text{destination}}$, the bus must pass through a series of stops at fixed times; however, since we cannot re-assign a bus during a transit trip, the locations and times of these stops are inconsequential to our model.
Finally, we assume that any bus may serve any trip. Note that it would be straightforward to extend our model and algorithms to consider constraints on which buses may serve a trip (e.g., based on passenger capacity).

\paragraph{Charging}
To charge its electric buses,
the agency operates a set of \emph{charging poles} $\mathcal{CP}$, which are typically located at bus garages or charging stations in practice.
We let $cp^\text{location} \in \calL$ denote the location of charging pole $cp \in \mathcal{CP}$.

For the sake of computational tractability, we use a discrete-time model to schedule charging, which divides time into uniform-length \emph{time slots} $\mathcal{S}$.\Aron{this is kind sloppy since we don't enforce the time slots to cover the duration of the entire schedule, which is needed by later formulations (should be kinda obvious, but not necessarily} 
A time slot $s \in \mathcal{S}$ begins at~$s^{\text{start}}$ and ends at~$s^{\text{end}}$. 
\Aron{can we include this in the appendix? in other words, can we have these results next week? if yes, then change reference to \cref{app:numerical}; if no, then comment out this sentence}
A charging pole~$cp \in \mathcal{CP}$ can charge $P(cp, M_v)$ energy to one electric bus $v$ in one time slot.
We will refer to the combination of a charging pole $cp \in \mathcal{CP}$ and a time slot $s \in \mathcal{S}$ as a \emph{charging slot} $(cp, s)$;
and we let $\mathcal{C} = \mathcal{CP} \times \mathcal{S}$ denote the set of charging slots.

\paragraph{Non-Service Trips}
Besides serving transit trips, buses may also need to drive between trips or charging poles.
For example, if a bus has to serve a trip that starts from a location that is different from the destination of the previous trip, the bus first needs to drive to the origin of the subsequent trip.
An electric bus may also need to drive to a charging pole after serving a transit trip to recharge, and then drive from the pole to the origin of the next transit trip.
We will refer to these deadhead trips, which are driven outside of revenue service, as \emph{non-service trips}.
%
We let $\MT(l_1, l_2)$ denote the non-service trip from location $l_1 \in \mathcal{L}$ to $l_2 \in \mathcal{L}$;
and we let $\DMT{l_1}{l_2}$ denote the time duration of this non-service trip.



\subsection{Solution Space}
Our primary goal is to assign a bus to each transit trip. 
Additionally, electric buses may also
need to be assigned to charging slots to prevent them from running out of power. 

\paragraph{Solution Representation}
We represent a solution as a \emph{set of assignments} $\calA$. 
For each trip $t \in \mathcal{T}$, a solution assigns exactly one bus $v \in \mathcal{V}$ to serve trip $t$; this assignment is represented by the relation \Aron{make sure that we use this notation everywhere consistently}
$\langle v,t \rangle \in \calA$.
Secondly, each electric bus $v$ must be charged before its battery state of charge drops below the safe level for operation.
A solution assigns at most one electric bus $v$ to each charging slot $(cp, s) \in \mathcal{C}$; this assignment is represented by the relation \Aron{$\langle v, (cp, s) \rangle \in \calA$ ?!}$\langle v, (cp, s) \rangle \in \calA$.
We assume that when a bus is assigned for charging, it remains at the charging pole for the entire duration of the corresponding time slot.

\paragraph{Constraints}
If a bus $v$ is assigned to serve an earlier transit trip $t_1$ and a later trip $t_2$, then the duration of the non-service trip from $t_1^\text{destination}$ to $t_2^\text{origin}$ must be less than or equal to the time between $t_1^\text{end}$ and $t_2^\text{start}$.
Otherwise, it would not be possible to serve $t_2$ on time.
We formulate this constraint~as:
\begin{align}
\forall t_1, t_2 \in \calT; ~ t_1^\text{start} \leq t_2^\text{start}; ~ \langle v,t_1 \rangle \in \calA ;  ~ \langle v,t_2 \rangle \in \calA : \nonumber \\
t_1^{\text{end}} + \DMT{t_1^{\text{destination}}}{t_2^{\text{origin}}} \leq t_2^{\text{start}}
\label{equ:constraint_1a}
\end{align}
Note that if the constraint is satisfied by every pair of consecutive trips assigned to a bus, then it is also satisfied by every pair of non-consecutive trips assigned to the bus. 

We need to formulate similar constraints for non-service trips to, from, and between charging slots:
\begin{align}
\forall t \in \calT; & ~ (cp, s) \in \calC; \, t^\text{start} \leq s^\text{start}; \,  \langle v,t \rangle, \langle v, (cp,s) \rangle \in \calA : \nonumber \\
& t^{\text{end}} +\DMT{t^{\text{destination}}}{cp^{\text{location}}} \leq s^{\text{start}}
\label{equ:constraint_1b} \\
\forall t \in \calT; & ~ (cp, s) \in \calC; t^\text{start} \geq s^\text{start};  \langle v,t \rangle, \langle v, (cp,s) \rangle \in \calA : \nonumber \\
& s^{\text{end}} +\DMT{{cp^{\text{location}}}}{{t^{\text{origin}}}} \leq t^{\text{start}}
\label{equ:constraint_1c} 
\end{align}
\begin{scriptsize}
\begin{align}
\forall (cp_1, s_1), & ~ (cp_2, s_2) \in \calC;  s_1^\text{start} \leq s_2^\text{start};   \langle v,(cp_1, s_1) \rangle, \langle v, (cp_2,s_2) \rangle \in \calA : \nonumber \\
& s_1^{\text{end}} + \DMT{cp_1^{\text{location}}}{cp_2^{\text{location}}} \leq s_2^{\text{start}}
\label{equ:constraint_1d}
\end{align}
\end{scriptsize}
\Aron{fix spacing once paper is ready}
\Aron{check back on improving formatting if we have extra time}


We also need to ensure that electric buses never run out of power.
First, we let $\calN(\calA, v, s)$ denote the set of all non-service trips that bus $v$ needs to complete by the end of time slot~$s$ according to the set of assignments $\calA$.
\Aron{note: in ACM TOIT submission, we can express this formally (here we don't have space)}
In other words, $\calN(\calA, v, s)$ is the set of all necessary non-service trips to the origins of transit trips that start by $s^\text{end}$ and to the locations of charging slots that start by $s^\text{end}$.
Next, we let \Aron{this should depend on the vehicle model, not the actual vehicle} $E(v, t)$ denote the amount of energy used by bus $v$ to drive a transit or non-service trip~$t$.
Then, we let $e(\calA, v, s)$ be the amount of energy used by bus~$v$ for all trips completed by the end of time slot $s$: 
\begin{equation}
e(\calA, v, s) = \sum_{t \in \calN(\calA, v, s)} E(v, t) + \quad \quad \sum_{\mathclap{t \in \calT, \, \langle v, t \rangle \in \calA , \, t^\text{end} \leq s^\text{end}}}  \quad ~~ E(v, t)    
\end{equation}
Similarly, we let $r(\calA, v, s)$ be the amount of energy charged to bus $v$ by the end of time slot $s$:
\begin{equation}
r(\calA, v, s) = \sum_{(cp, \hat{s}) \in \calC, \, \langle v, (cp,\hat{s}) \rangle \in \calA, \, \hat{s}^\text{end} \leq s^\text{end} } P(cp, M_v)
\end{equation}
Since a bus can be assigned for charging only to complete time slots, the minima and maxima of its battery level will be reached at the end of time slots.
Therefore, we can express the constraint that the battery level of bus $v$ must always remain between $0$ and the battery capacity $C_{M_v}$ as
\begin{equation}
    \forall v \in \calV, \forall s \in \calS: ~ 0 < r(\calA, v, s) - e(\calA, v, s) \leq C_{M_v} .
    \label{eq:energy_constr}
\end{equation}
Note that we can give vehicles an initial battery charge by adding ``virtual'' charging slots before the day starts.

\subsection{Objective}
Our objective is to minimize the energy use of the transit vehicles.
We can use this objective to minimize both environmental impact and operating costs by imposing the appropriate cost factors on the energy use of liquid-fuel and electric vehicles. 
We let $K^\text{gas}$ and $K^\text{elec}$ denote the unit costs of energy use for liquid-fuel and electric vehicles, respectively.
Then, by applying the earlier notation $e(\calA, v, s)$ to all vehicles, we can express our objective as 
\begin{equation}
\min_{\calA} ~~\quad \sum_{\mathclap{v \in \calV: \, M_v \in \calM^\text{gas}}} ~ K^\text{gas} \! \cdot e(\calA, v, s_\infty) + ~~\quad \sum_{\mathclap{v \in \calV: \, M_v \in \calM^\text{elec}}} ~ K^\text{elec} \! \cdot e(\calA, v, s_\infty)
\label{equ:obj_main}
\end{equation}
where $s_\infty$ denotes the last time slot of the day.

\sectionClearPage
\section{Algorithms}
\label{sec:algo}

Since this optimization problem is computationally hard (see proof sketch in \begin{revision}
\ifExtendedVersion
\cref{app:complexity}),
\else
an online appendix~\cite{sivagnanam2020minimizing}),
\fi
\end{revision}we first present an integer program to find optimal solutions for smaller instances (\cref{sec:ip}).
Then, 
we introduce efficient greedy (\cref{sec:greedy}) and simulated annealing algorithms (\cref{sec:annealing}),
which scale well for larger instances. \begin{revision}
Due to lack of space, we include less important subroutines in
\ifExtendedVersion
 \cref{app:algorithms}.
\else
the online appendix~\cite{sivagnanam2020minimizing}.
\fi
\end{revision}



\subsection{Integer Program}
\label{sec:ip}



\paragraph{Variables}
Our integer program has five sets of variables.
Three of them are binary to indicate assignments and non-service trips.
First, $a_{v,t} = 1$ (or $0$) indicates that trip $t$ is assigned to bus $v$ (or that it is not). 
Second, $a_{v, (cp,s)} = 1$ (or $0$) indicates that charging slot $(cp,s)$ is assigned to electric bus $v$ (or not). 
Third, $m_{v,x_1,x_2} = 1$ (or $0$) indicates that bus $v$ takes the non-service trip between a pair  $x_1$ and $x_2$ of transit trips and/or charging slots (or not). 
Note that for requiring non-service trips (see \cref{equ:constraint_1a,equ:constraint_1b,equ:constraint_1c,equ:constraint_1d}), we will treat transit trips and charging slots similarly since they induce analogous constraints.
There are also two sets of continuous variables.
First, $c^v_s \in [0, C_{M_v}]$ represents the amount of energy charged to electric bus $v$ in time slot $s$. 
Second, $e^v_s \in [0, C_{M_v}]$ represents the battery level of electric bus $v$ at the start of time slot~$s$ (considering energy use only for trips that have ended by that time).
Due to the continuous variables, our program is a mixed-integer program.

\paragraph{Constraints}
First, we ensure that every transit trip is served by exactly one bus:
\begin{equation*}
\forall t \in \calT: ~ \sum_{v \in \mathcal{V}} a_{v,t} = 1
\end{equation*}
Second, we ensure that each charging slot is assigned at most one electric vehicle: 
\begin{equation*}
    \forall (cp,s) \in \calC: \sum_{ \forall v \in \calV: ~  M_v \in \mathcal{M}^{\text{elec}}} a_{v,(cp,s)} \leq 1
\end{equation*}

Next, we ensure that \cref{equ:constraint_1a,equ:constraint_1b,equ:constraint_1c,equ:constraint_1d} are satisfied.
We let $F(x_1, x_2)$ be \emph{true} if a pair $x_1, x_2$ of transit trips and/or charging slots satisfies the applicable one from \cref{equ:constraint_1a,equ:constraint_1b,equ:constraint_1c,equ:constraint_1d}; and  let it be \emph{false} otherwise.
Then, we can express these constraints as follows:
\begin{equation*}
\forall v \in \calV, \forall x_{1}, x_{2}, \lnot F(x_1, x_2) : ~ a_{v,x_1} + a_{v, x_2} \leq 1
\end{equation*}

When a bus $v$ is assigned to both $x_1$ and $x_2$, but it is not assigned to any other transit trips or charging slots in between (i.e., if $x_1$ and $x_2$ are consecutive assignments), then bus~$v$ needs to take a non-service trip: 
\begin{equation*}
m_{v, x_1, x_2} \geq a_{v, x_1} + a_{v, x_2} - 1
    -  {\sum_{\substack{x \in \calT \,\cup\, \calC:~ x_1^\text{start} \leq x^\text{start} \leq x_2^\text{start}}}}  a_{v,x}
\end{equation*}
Note that if $x_1$ ends at the same location where $x_2$ starts, then the non-service trip will take zero time and energy.

Finally, we ensure that the battery levels of electric buses remain between zero and capacity.
First, for each slot $s$ and electric bus $v$, the amount of energy charged $c^v_s$ is subject to
\begin{equation*}
    c^v_{s} \leq \sum_{(cp,s) \in \calC} a_{v, (cp,s)} \cdot P(cp, M_v) .
\end{equation*}
Then, for the $(n+1)$\textit{th} time slot $s_{n+1}$ and for an electric bus $v$, we  can  express variable $e_{s_{n+1}}^v$~as 
\Aron{equation to wide, break to two lines?}
\begin{align*}
e^v_{s_{n+1}} \!\!= e^v_{s_n} \!\! + 
       c^v_{s_n} \!\!&- ~~~~~~~~~ {\sum_{\mathclap{\substack{t \in \calT : \, s_n^\text{start} < t^\text{end} \leq s_n^\text{end}}}}} ~~~~~~~~~ a_{v,t} \! \cdot \! E(v,t) \\
     &- ~~~~~~~~~ {\sum_{ \mathclap{\substack{ x_1, x_2 : \, s_n^\text{start} < x_2^\text{start} \leq s_n^\text{end} }}}} ~~~~~~~~~ m_{v, x_1, x_2} \! \cdot \! E(v, \MT(x_1, x_2))
\end{align*}
%
where $s_{n}$ is the $(n)$\textit{th}  slot. Note that since $e^v_s \in [0, C_{M_v}]$, this constraint ensures that \cref{eq:energy_constr} is satisfied.



    

\paragraph{Objective}
We can express \cref{equ:obj_main} as minimizing
\begin{small}
\begin{align*}
    \smashoperator{\sum_{v \in \calV}} K^{M_v} \! \left[ \sum_{t \in \calT} a_{v,t} \!\cdot\! E(v, t) + \smashoperator{\sum_{x_1, x_2 \in \calT \cup \calC}} m_{v,x_1,x_2} \!\cdot\! E(v, \MT(x_1, x_2)) \right] 
\end{align*}
\end{small}
where $K^{M_v}$ is $K^\text{elec}$ if $M_v \in \calM^\text{elec}$ and $K^\text{gas}$ otherwise.



\paragraph{Complexity}
The integer program contains both variables and constraints in the order of $\mathcal{O}(|\mathcal{V}|\cdot|\mathcal{T}|^2)$.\Aron{if we have space, add note about fraction of overlaps}

\subsection{Greedy Algorithm}
\label{sec:greedy}

Next, we introduce a polynomial-time greedy algorithm.
The key idea of this algorithm is to choose between assignments based on a \emph{biased cost} instead of the actual cost.

\begin{algorithm}[!h]
 \caption{$\textbf{BiasedCost}(\mathcal{A}, v, x, \alpha)$}
 \label{algo:energy_cost}

\uIf {$x \in \calT$}
{
 $\textit{cost} \leftarrow E(v,x)$
}
\Else
{
 $\textit{cost} \leftarrow 0$
}
  
  $\textit{Earlier} = \left\{ \hat{x} \in \calT \cup \calC \, \middle| \, \langle v, \hat{x} \rangle \in \calA \wedge  \hat{x}^{end} \leq x^{start} \right\}$
  

 \If{$\textit{Earlier} \neq \emptyset $}
 {
   $x_{prev} = \argmax_{\hat{x} \, \in \, \textit{Earlier}} \hat{x}^{end}$
   
    $m_{prev} \leftarrow T\left(x_{prev}^{destination}, x^{origin}\right)$
    
    $\textit{cost} \leftarrow \textit{cost}  +    E(v, m_{prev}) + \alpha \cdot \left( x^{start} - x^{end}_{prev} \right)$
 }
 
   $\textit{Later} = \left\{ \hat{x} \in \calT \cup \calC \, \middle| \, \langle v, \hat{x} \rangle \in \calA \wedge  x^{end} \leq \hat{x}^{start} \right\}$
   

 \If{$\textit{Later} \neq \emptyset $}
 {
 
 $x_{next} = \argmin_{\hat{x} \, \in \, \textit{Earlier}} \hat{x}^{start}$
 
    $m_{next} \leftarrow T\left(x^{destination}, x_{next}^{origin}\right)$
    
    $\textit{cost} \leftarrow \textit{cost} +   E(v,m_{next}) + \alpha \cdot \left( x^{end} - x^{start}_{next} \right)$
    
 }

 \KwResult{$\textit{cost}$}
\end{algorithm}




  

  
    
    
   
    
    

     


\paragraph{Biased Energy Cost}

Our greedy approach uses \cref{algo:energy_cost} to compute a \emph{biased energy cost} of assigning a bus~$v$ to a transit trip or charging slot $x$. If $x$ is a transit trip (i.e., $x \in \calT$), then the base cost of the assignment is $E(v, x)$. If $x$ is a charging slot, then the base cost is zero.
To compute the actual cost, the algorithm checks if bus $v$ is already assigned to any earlier (or later) transit trips or charging slots. 
If it is, then it factors in the cost of the moving trip $m_{prev}$ (and $m_{next}$) from the preceding (and to the following) assignment $x_{prev}$ (and $x_{next}$).
Finally, the algorithm adds a bias to the actual cost based on the waiting time between $x_{prev}^{end}$ and $x^{start}$ (if $x_{prev}$ exists) and between $x^{end}$ and $x_{next}^{end}$ (if $x_{next}$ exists).
By adding these waiting times to the cost with an appropriate factor $\alpha > 0$, we nudge the greedy selection towards increasing bus utilization and minimizing layovers.
The time complexity of this algorithm is $\mathcal{O}\left(|\mathcal{T} \cup \mathcal{C}|\right)$.
\begin{algorithm}[!h]
 \caption{$\textbf{Greedy}(\mathcal{V}, \mathcal{T}, \mathcal{C}, \alpha)$}
 \label{algo:greedy_approach}
 
 $\mathcal{A} \leftarrow \emptyset$ 


$\calE \leftarrow \left\{ \langle v, t \rangle \mapsto \textbf{BiasedCost}(\calA, v, t, \alpha) ~ | ~ v \in \calV, t \in \calT \right\} $
 
 \While {$|\mathcal{T}| > 0$ \textnormal{\textbf{and}} $\min [ \calE ] \neq \infty$}
 {  

      
     
           $\textit{MinimumCostAssignments} \leftarrow \text{argmin}(\calE)$
          
         $\langle v, t \rangle \leftarrow \textbf{first}(\textit{MinimumCostAssignments})$
         
        
        
        $\mathcal{A} \leftarrow \mathcal{A} \cup \{ \langle v,t \rangle \}$ 
        
        $\mathcal{T} \leftarrow \mathcal{T} \setminus \{t\}$
        
        $\calE, \calA \leftarrow \textbf{Update}(\calA, \mathcal{T}, \mathcal{C}, \calE, v, t, \alpha)$

        \Comment{update cost values $\calE$ and and add charging slots to the assignments as necessary}

}
 \KwResult{$\mathcal{A}$}
\end{algorithm}

\cref{algo:greedy_approach} shows our iterative greedy approach for assigning transit trips and charging slots to buses. 
The algorithm begins by computing the biased assignment cost for each pair of a bus $v$ and transit trip $t$ using 
$\textbf{BiasedCost}(\mathcal{A}, v, t, \alpha)$.
Starting with an empty set $\calA = \emptyset$,
the algorithm then iteratively adds assignments $(v, t) \in \calV \times \calT$ to the set, always choosing an assignment with the lowest biased cost $\textnormal{\textbf{BiasedCost}}(\calA, v, t, \alpha)$ (breaking ties arbitrarily).
After each iteration, the biased costs $\calE$ for the chosen vehicle $v$ are updated by $\textbf{Update}$, which also adds charging slot assignments as necessary (see \begin{revision}
\ifExtendedVersion
\cref{app:algorithms}).
\else
the online appendix~\cite{sivagnanam2020minimizing}).
\fi
\end{revision}
The algorithm terminates once all trips are assigned (or if it fails to find a solution).
The time complexity of \textbf{Update} is   $\mathcal{O}\left(|\calT|\cdot|\calV| + |\calT|\cdot|\calC|\cdot|\calX|\ln|\calX| \right)$,  where $\calX = \calT \cup \calC$. Since typically $|\calT| \gg |\calV|$, the complexity can be simplify into $\mathcal{O}\left(|\calT|\cdot|\calC|\cdot|\calX|\ln|\calX| \right)$.
Accordingly, the time complexity of the greedy algorithm is $\mathcal{O}\left(|\calT|^2\cdot|\calC|\cdot|\calX|\ln|\calX| \right)$. 

\subsection{Simulated Annealing}
\label{sec:annealing}
Finally,
we introduce a simulated annealing algorithm, which improves upon the output of the greedy algorithm  using iterative random search.
Starting from a greedy solution, the search takes significantly less time than starting from random solution. 
The key element of this algorithm is choosing a random ``neighboring'' solution in each iteration.

\begin{algorithm}[ht]
 \caption{$\textbf{RandomNeighbor}(\mathcal{A}, p_{swap})$}
 \label{algo:mutation}
     $\textit{NumberOfSwaps} \leftarrow \textbf{max}\{1, |\mathcal{A}| \cdot p_{swap}\}$
     
    \For {$1, \ldots, \textnormal{\textit{NumberOfSwaps}}$}
    {
     $v_1, v_2 \leftarrow \textbf{UniformRandom}(\mathcal{V})$
     
     $\mathcal{T}_1 \leftarrow \{{t} \in \mathcal{T} ~|~ \langle v_1,{t} \rangle \in \mathcal{A}\}$
     
    $\mathcal{T}_2 \leftarrow \{{t} \in \mathcal{T} ~|~ \langle v_2,{t} \rangle \in \mathcal{A}\}$
    
    $\textnormal{\textit{SplitTime}} \gets \textbf{UniformRandom}([s_1^{start}, s_\infty^{end}])$
     
    $\mathcal{T}_{noswap}, \mathcal{T}_{swap} = \textbf{SplitByStartTime}(\{\mathcal{T}_1, \mathcal{T}_2\}, \textnormal{\textit{SplitTime}})$
    
        

    
    \For{$t \in \mathcal{T}_{swap}$}
    { 
        \uIf{$t \in \mathcal{T}_1$}
        {
           $\mathcal{A} \leftarrow \mathcal{A} \setminus \{\langle v_1, t \rangle\} \cup  \{\langle v_2, {t} \rangle\}$
        }
        \Else 
        {
          $\mathcal{A} \leftarrow \mathcal{A} \setminus \{\langle v_2, t \rangle\} \cup \{ \langle v_1, {t} \rangle\}$
        }
        }

    }
     
     
     
     
    
       
     
    
 \KwResult{$\mathcal{A}$}
\end{algorithm}
\paragraph{Random Neighbor}
Simulated annealing uses \cref{algo:mutation} to generate a random  neighbor for a candidate solution $\mathcal{A}$. The algorithm first chooses two vehicles $v_1,v_2$ at random from $\mathcal{V}$. Next, the algorithm enumerates all the trips $\calT_1$ and $\calT_2$ that are assigned to these vehicles in solution $\calA$,
chooses a random point in time $\textit{SplitTime}$ during the day, and splits all these trips into two sets $\mathcal{T}_{noswap}$ and $\mathcal{T}_{swap}$ based on the start times of the trip and $\textit{SplitTime}$.
Finally, the algorithms swaps all the trips in $\mathcal{T}_{swap}$ between $v_1$ and $v_2$ (i.e., trips that were assigned to $v_1$ are re-assigned to $v_2$ and vice versa).
The algorithm then repeats this process from the beginning until the desired number of swap operations $|\mathcal{A}| \cdot p_{swap}$ is reached. 
The time complexity of this algorithm is $\mathcal{O}\left(|\calA|\cdot(|\calT|+|\calV|)\right)$. 


\begin{algorithm}[!h]
 \caption{$\textbf{Simulated Annealing}(\mathcal{V}, \mathcal{T}, \mathcal{C},  \alpha, k_{max}, \newline p_{start}, p_{end}, p_{swap})$}
 \label{algo:sti_mul_anneal}

$\mathcal{A} \leftarrow \textbf{Greedy}(\mathcal{V}, \mathcal{T}, \mathcal{C}, \alpha)$


$\textnormal{\textit{Solutions}} \leftarrow \{ \mathcal{A} \}$

$\tau_{start} \leftarrow \frac{-1}{\ln{p_{start}}}$

$\tau_{end} \leftarrow \frac{-1}{\ln{p_{end}}}$

$\tau_{rate} \leftarrow \left(\frac{\tau_{end}}{\tau_{start}}\right)^{\frac{1}{k_{max} -1}} $

$\tau_{k} \leftarrow \tau_{start}$

$\delta_{avg} \leftarrow 0$


 \For {$k = 1, 2, \ldots, k_{max}$}
 {  
    
    $\mathcal{A'} \leftarrow \textbf{RandomNeighbor}(\mathcal{A}, p_{swap})$

    $\delta_e \leftarrow \textbf{Cost}(\mathcal{A'}) - \textbf{Cost}(\mathcal{A})$
    
    \If{k = 1}
    {
        $\delta_{avg} \leftarrow \delta_e$
    }
    
    $\textit{AcceptProbability} \leftarrow \exp{\left(\frac{-\delta_e}{\delta_{avg} \cdot \tau_k}\right)}$

    \If{$\textnormal{\textbf{Cost}}(\calA') < \textnormal{\textbf{Cost}}(\calA)  ~\textnormal{\textbf{or}}~ \textit{AcceptProbability} > \textnormal{\textbf{UniformRandom}}([0,1])$}
    {
    
     $\mathcal{A} \leftarrow  \mathcal{A'}$
     
    $\delta_{avg} \leftarrow \delta_{avg} + \frac{\delta_e - \delta_{avg}}{|Solutions|} $
          
     $\textnormal{\textit{Solutions}} \leftarrow \textnormal{\textit{Solutions}} \cup \{\mathcal{A}\}$

    }
     
     $\tau_{k} \leftarrow \tau_{k} \cdot \tau_{rate}$   
 }
 
 $\calA^* \gets \argmin_{\calA' \in \textnormal{\textit{Solutions}}} \textbf{Cost}(\calA')$
 
 \KwResult{$\calA^*$} 
\end{algorithm}

\cref{algo:sti_mul_anneal} shows our simulated annealing approach.
\Aron{this algorithms is ``textbook'' simulated annealing, right? we should provide a reference and perhaps say this}
First, the algorithm obtains an initial solution $\mathcal{A}$ using \cref{algo:greedy_approach}. Then, it follows an iterative process. In each iteration, the algorithm obtains a random neighboring solution $\mathcal{A}'$ of the current solution $\mathcal{A}$ using \textbf{RandomNeighbor}. If the energy cost \textbf{Cost} (see \cref{equ:obj_main}) of $\mathcal{A}'$ is lower than the energy cost of $\mathcal{A}$, then the algorithm always accepts $\mathcal{A}'$ as the new solution. Otherwise, the algorithm computes the probability $\textit{AcceptProbability}$ of accepting it based on a decreasing temperature value $\tau_k$ and the cost difference between $\calA'$ and $\calA$, and then accepts $\calA'$ at random. 
The algorithm terminates after a fixed number of iterations $k_{max}$ and returns the best solution found up to that point. 
The time complexity of this algorithm is
\Aron{$\calA$ cannot be in BigO since it is not given as input}
$\mathcal{O}\left(k_{max}\cdot |\calT|\cdot(|\calT|+|\calV|)\right)$.

\sectionClearPage
\definecolor{ColorCustomGreen}{rgb}{0, 0.8, 0}
\colorlet{ColorIP}{ColorCustomGreen} 
\colorlet{ColorLegendIP}{ColorIP!50}
\colorlet{ColorSimAnn}{red}
\colorlet{ColorLegendSimAnn}{ColorSimAnn!50}
\colorlet{ColorGenAlg}{orange}
\colorlet{ColorLegendGenAlg}{ColorGenAlg!50}
\colorlet{ColorGreedy}{blue}
\colorlet{ColorLegendGreedy}{ColorGreedy!50}
\colorlet{ColorReal}{black}
\colorlet{ColorLegendReal}{ColorReal!50}

\section{Numerical Results}
\label{sec:numerical}
We evaluate our algorithms using data collected from~\begin{revision}CARTA\end{revision}. 
We will release the complete dataset as well as our implementation publicly.

\subsection{Dataset}


\paragraph{Public Transit Schedule}
We obtain the schedule of the transit agency in GTFS format,
which includes all trips, time schedules, bus stop locations, etc. 
Trips are organized into 17 bus lines (i.e., bus routes) throughout the city.
For our numerical evaluation, we consider trips served during weekdays (Monday to Friday) since these are the busiest days. 
Each weekday, the agency must serve around 850 trips using  
3 electric buses of model BYD K9S, and 50 diesel and hybrid buses of various models. 

\paragraph{Energy Use Prediction}
To estimate the energy usage of each transit and non-service trip, we use a neural network based prediction model, which we train on high-resolution historical data.
\begin{revision}CARTA\end{revision} has installed sensors on its mixed-fleet of vehicles, and it has been collecting data continuously for over a year at 1-second intervals from 3 electric, 41 diesel, and 6 hybrid buses.
To train the predictor, we select 6 months of data from 3 electric 
\begin{revision}
vehicles (BYD K9S buses)
\end{revision} and 3 diesel vehicles 
\begin{revision}
(2014 Gillig Phantom buses). We obtain 0.1 Hz timeseries data from onboard telemetry devices, which record location (GPS), odometer, battery current, voltage, and charge (for EVs), and fuel level and usage (for diesel).\end{revision}
In total, we obtain around 6.6 million datapoints for electric buses and 1.1 million datapoints for diesel buses (fuel data was recorded less frequently). 

We augment this dataset with additional features related to weather, road, and traffic conditions to improve our energy-use predictor.
We incorporate hourly  predictions of weather features, which are based on data collected using Dark Sky API \cite{darkskyapi} at 5-minute intervals. Weather features include temperature, humidity, pressure, wind speed, and precipitation. 
We include road-condition features based on a street-level map of the city obtained from OpenStreetMap.
We also include road gradients, which we compute along transit routes using an elevation map that is based on high-accuracy LiDAR data from the state government. 
Finally, 
we incorporate predictions of traffic conditions, which are  based on data obtained using HERE Maps API~\cite{heremaps}.
\begin{revision}
 Note that we present more detailed description of the dataset in
\ifExtendedVersion \cref{app:dataset}.
\else
\cite{aymandata}.
\fi
\end{revision}

In total, we use 26 different features to train neural network models for energy prediction.
%
Our neural network  has one input, two hidden, and one output layer, all using sigmoid activation.
We chose  this architecture based on its accuracy after comparing it to various other regression models. 
We train a different prediction model for each vehicle model, which we then use to predict energy use for every~trip.
\begin{revision}
For 30-minute trips, the mean absolute percentage error of the energy predictor is 8.4\% for diesel and 13.1\% for electric vehicles. For an entire day of service, the error is only 1.5\% and 2.6\%, respectively.
\end{revision}

\paragraph{Non-Service Trips}
Since non-service trips are not part of the transit schedule, we need to plan their routes and estimate their durations.
For this, we use the Google Directions API, which we query for all 2,070 possible non-service trips (i.e., for every pair of locations in the network) for each 1-hour interval of a selected weekday from 5am to 11pm. 
%
The response to each query includes an estimated duration as well as a detailed route, which we combine with our other data sources and then feed into our energy-use predictors.
 
\paragraph{Charging Rate, Energy Costs, and CO$_2$ Emissions}
Electric buses of model BYD K9S 
have a battery capacity of 270 kWh, and the charging poles of the agency can charge a BYD K9S model bus at the rate of $65$ kW/h. 
We consider 3 charging poles for our numerical evaluation.
Based on data from the transit agency, we consider electricity cost to be \$9.602 per 100 kWh and diesel cost to be \$2.05 per gallon. Finally, based on data from EPA \cite{greenhouse}, we calculate CO$_2$ emissions for diesel vehicles as 8.887 kg/gallons and for electric vehicles as 0.707 kg/kWh.

\subsection{Results}
\label{sec:results}

\begin{revision}
For all experiments, we set the length of time slots to be 1 hour.
For experiments with small problem instances, we set the wait-time factor (see \cref{algo:energy_cost}) to $\alpha = 0.00004$ for electric buses and
to $\alpha = 0.00002$ for liquid-fuel buses; swapping rate to $p_{swap} = 0.03$ (see \cref{algo:mutation}); simulated-annealing iterations, initial probability, and final probability to $k_{max} = 50,000$, $p_{start} = 0.1$, and $p_{end} = 0.07$, respectively (see \cref{algo:sti_mul_anneal}). 
For experiments with complete daily schedules, we set wait-time factor (see \cref{algo:energy_cost}) to $\alpha = 0.001$ for electric buses  and
to $\alpha = 0.0002$ for liquid-fuel buses; swapping rate to $p_{swap} = 0.05$; simulated-annealing iterations, initial probability, and final probability to $k_{max} = 50,000$, $p_{start} = 0.2$, and $p_{end} = 0.09$, respectively.
\end{revision}

We found these to be optimal configuration based on a grid search of the parameter space. 
\begin{revision}
Due to the lack of space, we include these search results in 
\ifExtendedVersion
\cref{app:numerical}.
\else
an online appendix~\cite{sivagnanam2020minimizing}.
\fi
\end{revision}

\pgfplotstableread[col sep=comma,]{data/small/times/greedy_times.csv}\greedytime
\pgfplotstableread[col sep=comma,]{data/small/times/sim_anneal_times.csv}\simannealtime
\pgfplotstableread[col sep=comma,]{data/small/times/ip_times.csv}\iptime
\begin{revision}
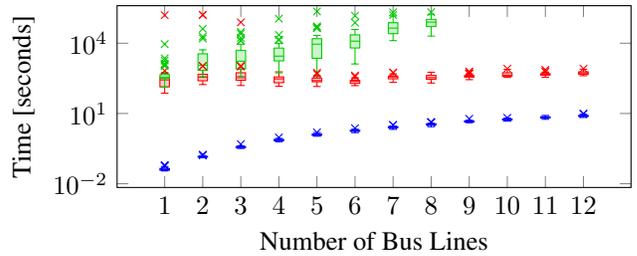
\begin{figure}[!ht]
\begin{tikzpicture}
\begin{axis}[
      boxplot/draw direction=y,
      xtick={1,2,3,4,5,6,7,8,9,10,11,12},
      width=\columnwidth,
      height = 4cm,
      bugsResolvedStyle/.style={},
      ylabel={Time [seconds]},
      xlabel={Number of Bus Lines},
      ymode=log,
      ymin=0,
      ymax=400000,
    ]
    \addplot+[boxplot={box extend=0.25, draw position=1}, ColorGreedy, solid, fill=ColorGreedy!20, mark=x] table [col sep=comma, y=line_1] {\greedytime};
\addplot+[boxplot={box extend=0.25, draw position=2}, ColorGreedy, solid, fill=ColorGreedy!20, mark=x] table [col sep=comma, y=line_2] {\greedytime};
\addplot+[boxplot={box extend=0.25, draw position=3}, ColorGreedy, solid, fill=ColorGreedy!20, mark=x] table [col sep=comma, y=line_3] {\greedytime};
\addplot+[boxplot={box extend=0.25, draw position=4}, ColorGreedy, solid, fill=ColorGreedy!20, mark=x] table [col sep=comma, y=line_4] {\greedytime};
\addplot+[boxplot={box extend=0.25, draw position=5}, ColorGreedy, solid, fill=ColorGreedy!20, mark=x] table [col sep=comma, y=line_5] {\greedytime};
\addplot+[boxplot={box extend=0.25, draw position=6}, ColorGreedy, solid, fill=ColorGreedy!20, mark=x] table [col sep=comma, y=line_6] {\greedytime};
\addplot+[boxplot={box extend=0.25, draw position=7}, ColorGreedy, solid, fill=ColorGreedy!20, mark=x] table [col sep=comma, y=line_7] {\greedytime};
\addplot+[boxplot={box extend=0.25, draw position=8}, ColorGreedy, solid, fill=ColorGreedy!20, mark=x] table [col sep=comma, y=line_8] {\greedytime};
\addplot+[boxplot={box extend=0.25, draw position=9}, ColorGreedy, solid, fill=ColorGreedy!20, mark=x] table [col sep=comma, y=line_9] {\greedytime};
\addplot+[boxplot={box extend=0.25, draw position=10}, ColorGreedy, solid, fill=ColorGreedy!20, mark=x] table [col sep=comma, y=line_10] {\greedytime};
\addplot+[boxplot={box extend=0.25, draw position=11}, ColorGreedy, solid, fill=ColorGreedy!20, mark=x] table [col sep=comma, y=line_11] {\greedytime};
\addplot+[boxplot={box extend=0.25, draw position=12}, ColorGreedy, solid, fill=ColorGreedy!20, mark=x] table [col sep=comma, y=line_12] {\greedytime};
\addplot+[boxplot={box extend=0.25, draw position=1}, ColorSimAnn, solid, fill=ColorSimAnn!20, mark=x] table [col sep=comma, y=line_1] {\simannealtime};
\addplot+[boxplot={box extend=0.25, draw position=2}, ColorSimAnn, solid, fill=ColorSimAnn!20, mark=x] table [col sep=comma, y=line_2] {\simannealtime};
\addplot+[boxplot={box extend=0.25, draw position=3}, ColorSimAnn, solid, fill=ColorSimAnn!20, mark=x] table [col sep=comma, y=line_3] {\simannealtime};
\addplot+[boxplot={box extend=0.25, draw position=4}, ColorSimAnn, solid, fill=ColorSimAnn!20, mark=x] table [col sep=comma, y=line_4] {\simannealtime};
\addplot+[boxplot={box extend=0.25, draw position=5}, ColorSimAnn, solid, fill=ColorSimAnn!20, mark=x] table [col sep=comma, y=line_5] {\simannealtime};
\addplot+[boxplot={box extend=0.25, draw position=6}, ColorSimAnn, solid, fill=ColorSimAnn!20, mark=x] table [col sep=comma, y=line_6] {\simannealtime};
\addplot+[boxplot={box extend=0.25, draw position=7}, ColorSimAnn, solid, fill=ColorSimAnn!20, mark=x] table [col sep=comma, y=line_7] {\simannealtime};
\addplot+[boxplot={box extend=0.25, draw position=8}, ColorSimAnn, solid, fill=ColorSimAnn!20, mark=x] table [col sep=comma, y=line_8] {\simannealtime};
\addplot+[boxplot={box extend=0.25, draw position=9}, ColorSimAnn, solid, fill=ColorSimAnn!20, mark=x] table [col sep=comma, y=line_9] {\simannealtime};
\addplot+[boxplot={box extend=0.25, draw position=10}, ColorSimAnn, solid, fill=ColorSimAnn!20, mark=x] table [col sep=comma, y=line_10] {\simannealtime};
\addplot+[boxplot={box extend=0.25, draw position=11}, ColorSimAnn, solid, fill=ColorSimAnn!20, mark=x] table [col sep=comma, y=line_11] {\simannealtime};
\addplot+[boxplot={box extend=0.25, draw position=12}, ColorSimAnn, solid, fill=ColorSimAnn!20, mark=x] table [col sep=comma, y=line_12] {\simannealtime};
\addplot+[boxplot={box extend=0.25, draw position=1}, ColorIP, solid, fill=ColorIP!20, mark=x] table [col sep=comma, y=line_1] {\iptime};
\addplot+[boxplot={box extend=0.25, draw position=2}, ColorIP, solid, fill=ColorIP!20, mark=x] table [col sep=comma, y=line_2] {\iptime};
\addplot+[boxplot={box extend=0.25, draw position=3}, ColorIP, solid, fill=ColorIP!20, mark=x] table [col sep=comma, y=line_3] {\iptime};
\addplot+[boxplot={box extend=0.25, draw position=4}, ColorIP, solid, fill=ColorIP!20, mark=x] table [col sep=comma, y=line_4] {\iptime};
\addplot+[boxplot={box extend=0.25, draw position=5}, ColorIP, solid, fill=ColorIP!20, mark=x] table [col sep=comma, y=line_5] {\iptime};
\addplot+[boxplot={box extend=0.25, draw position=6}, ColorIP, solid, fill=ColorIP!20, mark=x] table [col sep=comma, y=line_6] {\iptime};
\addplot+[boxplot={box extend=0.25, draw position=7}, ColorIP, solid, fill=ColorIP!20, mark=x] table [col sep=comma, y=line_7] {\iptime};
\addplot+[boxplot={box extend=0.25, draw position=8}, ColorIP, solid, fill=ColorIP!20, mark=x] table [col sep=comma, y=line_8] {\iptime};
\end{axis}
\end{tikzpicture}
\caption{Computation times for assignments using integer program (\textcolor{ColorLegendIP}{$\blacksquare$}), simulated annealing (\textcolor{ColorLegendSimAnn}{$\blacksquare$}), and greedy algorithm (\textcolor{ColorLegendGreedy}{$\blacksquare$}).
Please note the logarithmic scale on the vertical axis.}
\label{fig:time_all}
\end{figure}

\end{revision}

\pgfplotstableread[col sep=comma,]{data/small/costs/greedy_diff_costs.csv}\greedycost
\pgfplotstableread[col sep=comma,]{data/small/costs/sim_anneal_diff_costs.csv}\simannealcost

\begin{revision}

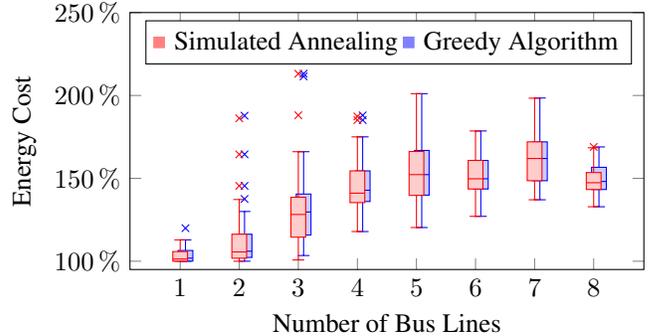
\begin{figure}
\begin{tikzpicture}
\begin{axis}[
      boxplot/draw direction=y,
      xtick={1,2,3,4,5,6,7,8},
      width=\columnwidth,
      height = 5cm,
      bugsResolvedStyle/.style={},
      ylabel={Energy Cost},
      xlabel={Number of Bus Lines},
      ymin=95,
      ymax=250,
      yticklabel=\pgfmathprintnumber{\tick}\,$\%$,
      legend pos=north west,
      legend columns=2,
      legend cell align={left},
    ]
    \addlegendimage{ColorLegendSimAnn, only marks, mark=square*};
    \addlegendentry{\,Simulated Annealing};
    \addlegendimage{ColorLegendGreedy, only marks, mark=square*}
    \addlegendentry{\,Greedy Algorithm};
    
    \addplot+[boxplot={box extend=0.25, draw position=1},ColorGreedy, rshift, solid, fill=ColorGreedy!20, mark=x] table [col sep=comma, y=line_1] {\greedycost};
\addplot+[boxplot={box extend=0.25, draw position=2},ColorGreedy, rshift, solid, fill=ColorGreedy!20, mark=x] table [col sep=comma, y=line_2] {\greedycost};
\addplot+[boxplot={box extend=0.25, draw position=3},ColorGreedy, rshift, solid, fill=ColorGreedy!20, mark=x] table [col sep=comma, y=line_3] {\greedycost};
\addplot+[boxplot={box extend=0.25, draw position=4},ColorGreedy, rshift, solid, fill=ColorGreedy!20, mark=x] table [col sep=comma, y=line_4] {\greedycost};
\addplot+[boxplot={box extend=0.25, draw position=5},ColorGreedy, rshift, solid, fill=ColorGreedy!20, mark=x] table [col sep=comma, y=line_5] {\greedycost};
\addplot+[boxplot={box extend=0.25, draw position=6},ColorGreedy, rshift, solid, fill=ColorGreedy!20, mark=x] table [col sep=comma, y=line_6] {\greedycost};
    \addplot+[boxplot={box extend=0.25, draw position=7},ColorGreedy, rshift, solid, fill=ColorGreedy!20, mark=x] table [col sep=comma, y=line_7] {\greedycost};
\addplot+[boxplot={box extend=0.25, draw position=8},ColorGreedy, rshift, solid, fill=ColorGreedy!20, mark=x] table [col sep=comma, y=line_8] {\greedycost};

    \addplot+[boxplot={box extend=0.25, draw position=1}, ColorSimAnn, solid, fill=ColorSimAnn!20, mark=x] table [col sep=comma, y=line_1] {\simannealcost};
\addplot+[boxplot={box extend=0.25, draw position=2}, ColorSimAnn, solid, fill=ColorSimAnn!20, mark=x] table [col sep=comma, y=line_2] {\simannealcost};
\addplot+[boxplot={box extend=0.25, draw position=3}, ColorSimAnn, solid, fill=ColorSimAnn!20, mark=x] table [col sep=comma, y=line_3] {\simannealcost};
\addplot+[boxplot={box extend=0.25, draw position=4}, ColorSimAnn, solid, fill=ColorSimAnn!20, mark=x] table [col sep=comma, y=line_4] {\simannealcost};
\addplot+[boxplot={box extend=0.25, draw position=5}, ColorSimAnn, solid, fill=ColorSimAnn!20, mark=x] table [col sep=comma, y=line_5] {\simannealcost};
\addplot+[boxplot={box extend=0.25, draw position=6}, ColorSimAnn, solid, fill=ColorSimAnn!20, mark=x] table [col sep=comma, y=line_6] {\simannealcost};
    \addplot+[boxplot={box extend=0.25, draw position=7}, ColorSimAnn, solid, fill=ColorSimAnn!20, mark=x] table [col sep=comma, y=line_7] {\simannealcost};
\addplot+[boxplot={box extend=0.25, draw position=8}, ColorSimAnn, solid, fill=ColorSimAnn!20, mark=x] table [col sep=comma, y=line_8] {\simannealcost};
7.07 \end{axis}
\end{tikzpicture}
\caption{Energy cost for assignments using simulated annealing (\textcolor{ColorLegendSimAnn}{$\blacksquare$}) and the greedy algorithm~(\textcolor{ColorLegendGreedy}{$\blacksquare$}) compared to optimal assignments (found using the integer program).}
	\label{fig:cost_all}
\end{figure}

\end{revision}

\paragraph{Computational Performance}

We first study how well our algorithms scale with increasing problem sizes. 
To this end, we measure the computation times of our algorithms with 1 to 12 bus lines (selected from the real bus lines), and 10 selected trips for each line. 
For each case, we evaluate the algorithms on 35 different samples and present statistical results.
For cases with 11 and 12 bus lines, we assign the entire vehicle fleet, which consists of 3 electric and 50 liquid-fuel buses.
For cases with fewer bus lines, we assume that the agency has 3 electric buses but only 5 times as many liquid-fuel buses as bus lines. 
We solve the integer program (IP) using IBM CPLEX.
\Aron{still true?} 
\Aron{but you were using only a single core for each algorithm run, right? then we should say  that you ran it on single core, not 28 cores}
We run all algorithms on a machine with a Xeon E5-2680 CPU, which has 28 cores, and 128 GB of RAM.
\cref{fig:time_all} shows the computation times for the IP, greedy, and simulated annealing.
As expected, the time to solve the IP is significantly higher and increases rapidly with the number of lines, becoming infeasible at around 10 lines.
On the the hand, the greedy and simulated annealing algorithms are orders of magnitude faster and scale well. 

\paragraph{Solution Quality}
Next, we evaluate the performance of our algorithms with respect to solution quality, that is, energy cost.
Note that we present CO$_2$ results in \begin{revision}
\ifExtendedVersion
\cref{app:numerical}.
\else
an online appendix~\cite{sivagnanam2020minimizing}.
\fi
\end{revision}We use the exact same setting as in the previous experiment (\cref{fig:time_all}).
Note that for larger instances, solving the IP is infeasible. 
\cref{fig:cost_all} shows that simulated annealing performs slightly better than greedy; however, neither perform as well as IP (which is optimal).
On the bright side, the cost ratio between IP and our heuristics \begin{revision}remains within the range of 1.5 to 1.6\end{revision} even for larger instances.




\pgfplotstableread[col sep=comma,]{data/full/costs/greedy_diff_costs.csv}\greedyco
\pgfplotstableread[col sep=comma,]{data/full/costs/sim_anneal_diff_costs.csv}\ipco

\begin{revision}
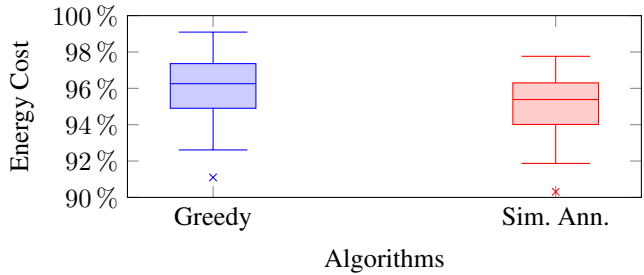
\begin{figure}
\begin{tikzpicture}
\begin{axis}[
      boxplot/draw direction=y,
      xtick={1,2},
      xticklabels={{Greedy}, {Sim. Ann.}},
      width=\columnwidth,
      height = 4cm,
      bugsResolvedStyle/.style={},
      ylabel={Energy Cost},
      xlabel={Algorithms},
      yticklabel=\pgfmathprintnumber{\tick}\,$\%$,
      ymin=90,
      ymax=100,
    ]
\addplot+[boxplot={box extend=0.25, draw position=1},ColorGreedy, solid, fill=ColorGreedy!20, mark=x] table [col sep=comma, y=greedy] {\greedyco};
\addplot+[boxplot={box extend=0.25, draw position=2}, ColorSimAnn, solid, fill=ColorSimAnn!20, mark=x] table [col sep=comma, y=sim_anneal] {\ipco};
\end{axis}
\end{tikzpicture}
\caption{Energy costs for assignments using the greedy algorithm (\textcolor{ColorLegendGreedy}{$\blacksquare$}) and simulated annealing (\textcolor{ColorLegendSimAnn}{$\blacksquare$})  for complete daily schedules,
compared to existing real-world assignments.} 
\label{fig:cost_full_sch}
\end{figure}
\end{revision}

\paragraph{Comparison to Existing Assignments}




Finally, we compute assignments for the complete daily schedule of the agency using 3 electric and 50 liquid-fuel buses using greedy and simulated annealing algorithms. In \cref{fig:cost_full_sch}, we compare greedy and simulated annealing assignments with real-world assignments for 50 different sample days based on energy costs. Our results shows that both algorithms attain lower energy costs than existing real-world assignments. On average, real-world assignments cost \$8187 with 35.58 metric tons of  CO$_2$ emission, greedy approach costs~\begin{revision}\$7863\end{revision} with \begin{revision}34.33\end{revision}~metric tons of CO$_2$ emission, and simulated annealing algorithm costs~\begin{revision}\$7788\end{revision} with \begin{revision}34.00\end{revision} metric tons of CO$_2$ emission. We were able to assign the full schedule using greedy algorithm in around 6 minutes; meanwhile, simulated annealing runs for around 8 hours (around 50,000 iterations).
Since an agency might need to find a new assignment urgently (e.g., because some buses are unavailable due to maintenance), the greedy algorithm can be a better option.

\iAron{please add here the average cost and emission values for each method over these days}

\ad{say something about whether these algorithms will have to be rerun daily because of bus breakdowns or schedule changes. hence time complexity matters. Again - project the trend to full state space of~\begin{revision}CARTA\end{revision}. num lines and num buses. What is the worst case?}

\sectionClearPage
\begin{revision}
\section{Related Work}
\label{sec:related}

Previous research efforts predict energy consumption through simulation models using spatial and temporal data (e.g.,~\citeauthor{wang2018bcharge}~\shortcite{wang2018bcharge}, \citeauthor{tian2016real}~\shortcite{tian2016real}, \citeauthor{wang2017data}~\shortcite{wang2017data}) collect GPS data, bus stop data, bus transaction data, traffic data, and electricity consumption data). Unlike the previous works, we derive realistic energy estimates using our energy predictors based on vehicle locations, traffic, elevation, and~weather data. Further, some works consider fixed energy cost and emission with respect to the miles traveled by the bus (e.g., \citeauthor{santos2016towards}~\shortcite{santos2016towards}, \citeauthor{paul2014operation}~\shortcite{paul2014operation}, \citeauthor{sassi2014vehicle}~\shortcite{sassi2014vehicle}). \citeauthor{li2014transit}~\shortcite{li2014transit} computes the distance of non-service trips using point-to-curve matching, then obtain optimal paths using Dijkstra's algorithm. But these assumptions limit applicability or performance for real-world implementation.

Researchers have applied various approaches such as genetic algorithm (\citeauthor{yang2020multi}~\shortcite{yang2020multi}, \citeauthor{sun2020optimizing}~\shortcite{sun2020optimizing}, \citeauthor{santos2016towards}~\shortcite{santos2016towards}), simulated annealing (\citeauthor{zhou2020collaborative}~\shortcite{zhou2020collaborative}), and column generation (\citeauthor{li2014transit}~\shortcite{li2014transit}) in the domain of energy-efficient bus scheduling with either liquid-fuel buses or electric buses. But only few research efforts (e.g., \citeauthor{santos2016towards}~\shortcite{santos2016towards}, \citeauthor{zhou2020collaborative}~\shortcite{zhou2020collaborative}) focused on transit networks operating mixed-fleets of buses. 

Other researchers have applied approaches such as integer programming (\citeauthor{nageshrao2017charging}~\shortcite{nageshrao2017charging}, \citeauthor{9160687}~\shortcite{9160687}, \citeauthor{picarelli2020model}~\shortcite{picarelli2020model}), Markov decision processes (\citeauthor{wang2018bcharge}~\shortcite{wang2018bcharge}), greedy algorithms (\citeauthor{jefferies2020comprehensive}~\shortcite{jefferies2020comprehensive}), genetic algorithms (\citeauthor{gao2018electric}~\shortcite{gao2018electric}, \citeauthor{chao2013optimizing}~\shortcite{chao2013optimizing}),
space-time networks (\citeauthor{olsen2020study}~\shortcite{olsen2020study}), and dynamic programming (\citeauthor{wang2020optimal}~\shortcite{wang2020optimal}) to optimally assign electric buses to charging stations.

\citeauthor{jahic2019preemptive}~\shortcite{jahic2019preemptive} use preemptive, quasi-preemptive, and non-preemptive approaches to effectively handle the load in charging stations or garages. Since charging duration occupies a reasonable portion of the routine, \citeauthor{chao2013optimizing}~\shortcite{chao2013optimizing} propose a battery replacement technique, but this approach is inefficient for transit agencies operating with a few electric buses. \citeauthor{murphey2012intelligent}~\shortcite{murphey2012intelligent} present the development of a machine learning framework for energy management optimization in an HEV, developing algorithms based on long- and short-term knowledge about the driving environment.

Some works propose solutions that reduce energy costs by changing bus schedules or routes (e.g., \citeauthor{hassold2014improving}~\shortcite{hassold2014improving}, \citeauthor{wang2018bcharge}~\shortcite{wang2018bcharge}), which can cause inconvenience to passengers, while \citeauthor{santos2016towards}~\shortcite{santos2016towards} schedule buses ensuring that service level is unchanged. \citeauthor{kliewer2006time}~\shortcite{kliewer2006time}, \citeauthor{kliewer2008line}~\shortcite{kliewer2008line}, \citeauthor{li2019mixed}~\shortcite{li2019mixed} allow a bus to serve multiple lines instead of limiting it to a single line, which can reduce energy cost; we also implement a similar approach. \citeauthor{li2019mixed}~\shortcite{li2019mixed} group the trips as origin-destination pairs and assign them to vehicles, which also reduces energy costs. \citeauthor{liao2019comparative}~\shortcite{liao2019comparative} optimize the routing of conventional and electric vehicles using separate models; in contrast, we optimize mixed fleets of vehicles using a single, integrated model.

\end{revision}

\sectionClearPage
\section{Conclusion}
\label{sec:concl}

\iAron{rewrite}
Due to the high upfront costs of EVs, many public transit agencies can only afford to operate mixed fleets of EVs, HEVs, and ICEVs. In this paper, we formulated the novel problem of minimizing operating costs and environmental impact through assigning trips and scheduling charging for  mixed fleets of public transit vehicles;
and we provided efficient greedy and simulated annealing algorithms. Based on real-world data from~\begin{revision}CARTA\end{revision}, we demonstrated that these algorithms scale well for larger instances and can provide significant savings in terms of energy costs and CO$_2$ emission. Even though our approaches perform better than existing real-world assignments, there remains a significant gap to optimal solutions (at least for smaller instances).
In future work, we will strive to improve our algorithms to close this gap and to provide further saving to transit agencies. 
We are also publicly releasing our dataset to facilitate open research in this direction.

\iAron{acknowledge that there is room for improvement, allude to future work and research directions (data will be released)}

\sectionClearPage
\subsection*{Acknowledgment}
We thank the anonymous reviewers of this paper who provides their valuable feedback and suggestions. This material is based upon work supported by the Department of Energy, Office of Energy Efficiency and Renewable Energy (EERE), under Award Number DE-EE0008467.
Disclaimer: This report was prepared as an account of work sponsored by an agency of the United States Government. Neither the United States Government nor any agency thereof, nor any of their employees, makes any warranty, express or implied, or assumes any legal liability or responsibility for the accuracy, completeness, or usefulness of any information, apparatus, product, or process disclosed, or represents that its use would not infringe privately owned rights. 
Reference herein to any specific commercial product, process, or service by trade name, trademark, manufacturer, or otherwise does not necessarily constitute or imply its endorsement, recommendation, or favoring by the United States Government or any agency thereof. 
The views and opinions of authors expressed herein do not necessarily state or reflect those of the United States Government or any agency thereof.

This work was completed in part with resources provided by the Research Computing Data Core at the University of Houston and in part using cloud research credits provided by Google. 

\sectionClearPage
\subsection*{Ethics Statement}

\todo{quick draft, needs to be rewritten (when I have time...)}

Our research did not involve any personally identifiable information.

Our objective formulation minimizes fuel and electricity usage (both measured as energy), both of which are scaled by factors ($K^\text{gas}$ and $K^\text{elec}$). In the evaluation, we set these factors to the prices paid by the agency for fuel and electricity; hence, minimizing energy usage minimizes the agency’s monetary cost. Please note that we could tweak these factors to consider other goals, e.g., minimize environmental impact by setting the factors to the environmental footprint of fuel and electric energy usage.


\iAron{ethics statement?}
\bibliography{main}

\ifExtendedVersion
\clearpage
\appendix

\section{Dataset Characterization}
\label{app:dataset}
For collecting data from CARTA's fleet of vehicles, we partner with ViriCiti, which offers sensor devices and an online platform to support transit operators with real-time insight into their fleets. ViriCiti has installed sensors on CARTA's mixed-fleet of 3 electric, 41 diesel, and 6 hybrid buses, and it has been collecting data continuously at 1-second (or shorter) intervals since installation.
To train the energy usage predictor, we use 6 months of data from 3 electric vehicles (BYD K9S buses) and 3 diesel vehicles (2014 Gillig Phantom buses). For each vehicle, we obtain time-series data from ViriCiti, which includes a series of timestamps and location (GPS), odometer, battery current, voltage, and charge (for EVs), and fuel level and usage (for diesel). We divide the time-series into disjoint samples that correspond to contiguous sequences driven on the same road. 

We augment the samples with additional features associated with weather, road, and traffic conditions to improve the energy use prediction. We incorporate hourly weather predictions for features such as temperature, humidity, pressure, wind speed, and precipitation; from Dark SkyAPI at 5-minute intervals. We also include road gradients, which we compute along transit routes using an elevation map based on high-accuracy LiDAR data from the state government. For incorporating traffic features, we use traffic data collected at 1-minute intervals using the HERE API, which provides speed recordings for segments of major roads of the city. A more detailed description of our dataset and energy prediction models can be found in \cite{aymandata}.

\section{Other Algorithms}
\label{app:algorithms}

Here, we present the algorithms that are invoked by the greedy algorithm as subroutines.

\begin{algorithm}[!h]
 \caption{$\textbf{Feasible}(\mathcal{A}, v, x)$}
 \label{algo:feasible}
 
 $\textit{TimeFeasible} \leftarrow \text{False}$

$\textit{EnergyFeasible} \leftarrow \text{False}$

 $\calX_\textit{assigned} \leftarrow \left\{ \hat{x} \in \calT \cup \calC \,\middle|\, (\hat{v}, \hat{x}) \in \calA \land \hat{v} \in \calV \right\} $
 
 \If {$x \notin \calX_\textit{assigned}$}
 {
 
  $\textit{TimeFeasible} \leftarrow \textbf{TimeFeasible}(\mathcal{A}, v, x)$

    $\textit{EnergyFeasible} \leftarrow \text{True}$

  $\calX_{v} \leftarrow \left\{ \hat{x} \in \calT \cup \calC \,\middle|\, (v, \hat{x}) \in \calA \right\} $

  \If{$\textit{TimeFeasible} \land M_v \in \calM^\text{elec} \land \calX_{v} \ne \emptyset $}
{
  $\textit{EnergyFeasible} \leftarrow
  \textbf{EnergyFeasible}(\mathcal{A}, v, x)$

}

 }

\KwResult{\textit{TimeFeasible}, \textit{EnergyFeasible}}
\end{algorithm}

\paragraph{Feasibility}
\cref{algo:feasible} checks the feasibility of assigning the trip or charging slot $x$ to bus $v$, without violating \cref{equ:constraint_1a,equ:constraint_1b,equ:constraint_1c,equ:constraint_1d,eq:energy_constr} (i.e., time and energy constraints). 
The algorithm first checks whether bus $v$ can be assigned to $x$ without violating  \cref{equ:constraint_1a,equ:constraint_1b,equ:constraint_1c,equ:constraint_1d} 
(\textbf{TimeFeasible}($\calA, v, x$)). 
If bus $v$ is a liquid-fuel vehicle, then the algorithm skips the remaining steps. If bus $v$ is an electric vehicle, then the algorithm checks whether bus $v$ has enough energy to be assigned to trip or charging slot $x$ without violating \cref{eq:energy_constr}
(\textbf{EnergyFeasible}($\calA, v, x$)). The time complexity of the algorithm $\textbf{Feasible}$ is $\mathcal{O}\left(|\calX|\ln|\calX|\right)$ where $\calX = \calT \cup \calC$. 
Note that we omit the pseudocode of \textbf{TimeFeasible} and \textbf{EnergyFeasible} since their implementation is straightforward based on \cref{equ:constraint_1a,equ:constraint_1b,equ:constraint_1c,equ:constraint_1d,eq:energy_constr}.

\begin{algorithm}[!h]
 \caption{$\textbf{AssignCharging}(\calA, \calC, v, t_\text{next}, \alpha)$}
 \label{algo:assign_charging}
 
  $c^* \leftarrow $ Nil
 
 $\mathcal{X}_\text{prev} = \left\{ x \in \calT \cup \calC \,\middle|\, \langle v, {x}\rangle \in \calA \wedge {x}^\text{end} \leq t^\text{start}_\text{next} \right\}$

 $x_\text{prev} \leftarrow \argmax_{{x} \in \mathcal{X}_\text{prev}} x^\text{end} $
 
 $\calC_\text{between} \leftarrow \big\{ \hat{c} \in \calC \,\big|\,$ $x_\text{prev}^\text{end} \leq  \hat{c}^\text{start}
 \wedge    \hat{c}^\text{end} \leq t_\text{next}^\text{start} \wedge (\hat{v}, \hat{c}) \notin \calA \wedge \hat{v} \in \calV \big\} $

\If {$\calC_\text{between} \neq \emptyset$}
{


$\textit{MinimumCost} \gets \infty$

\For {$c \in \calC_\text{between}$}
 {

     $\textit{Cost} \leftarrow \infty$

         $\textit{TimeFeasible}, \textit{EnergyFeasible} \leftarrow \textbf{Feasible}(\mathcal{A}, v, c)$

\If {$\textit{TimeFeasible} \land \textit{EnergyFeasible}$}
{

        $\textit{Cost} \leftarrow \textbf{BiasedCost}(\mathcal{A},c,t, \alpha)$

       }
       
       \If{$\textit{Cost} < \textit{MinimumCost}$}{
         $c^* \gets c$
         
         $\textit{MinimumCost} \gets \textit{Cost}$
       }
       
 }

     \If{$\textit{MinimumCost} < \infty$}
     {
 
        $\mathcal{A} \leftarrow \mathcal{A} \cup \{\langle v, c^*\rangle\}$
    
     }
     
}

 \KwResult{$\mathcal{A}, c^*$}
\end{algorithm}

\paragraph{AssignCharging}
\cref{algo:assign_charging} first identifies a set of charging slots ($\calC_\text{between}$) to which the electric bus $v$ could be assigned (disregarding the duration of non-service trips).
If the algorithm identifies a non-empty set of possible charging slots, then for each possible charging slot $c \in \calC_\text{between}$, the algorithm computes the feasibility (\textbf{Feasible}) and biased cost (\textbf{BiasedCost}) of the non-service trips to move bus $v$ from the destination of its previous assignment $x_\text{prev}$ to charging slot $c$ and then from charging slot $c$ to the origin of its next assignment $t_\text{next}$. 
Thereafter, the algorithm chooses the charging slot $c$ that attains minimum biased energy cost (breaking ties in favor of earlier charging slots) and assigns charging slot $c$ to bus $v$.
If the algorithm fails to find a feasible charging slot, then the assignments are unchanged, and the algorithm returns a special Nil value instead of a charging slot $c$.
The time complexity of the algorithm is $\mathcal{O}\left(|\calC|\cdot|\calX|\ln|\calX|\right)$.

\begin{algorithm}[!h]
 \caption{$\textbf{Update}(\mathcal{A}, \mathcal{T}, \mathcal{C}, \calE, v, t, \alpha)$}
 \label{algo:update_cost_matrix}

$\textit{AssignedCharging} \leftarrow \text{False}$

\For {$\hat{t} \in \mathcal{T}$}
 {
 
     $\textit{cost} \leftarrow \infty$

         $\textit{TimeFeasible}, \textit{EnergyFeasible} \leftarrow \textbf{Feasible}(\mathcal{A}, v, \hat{t})$

\uIf {$\textit{TimeFeasible} \land \textit{EnergyFeasible}$}
{

        $\textit{cost} \leftarrow \textbf{BiasedCost}(\mathcal{A},v,\hat{t},\alpha)$

       }
   \ElseIf {$\textit{TimeFeasible} \land \lnot\textit{EnergyFeasible} \land M_v \in \calM^\text{elec}$}
   {
   
    $\mathcal{A}, \text{c} \leftarrow
   \textbf{AssignCharging}(\mathcal{A}, \mathcal{C}, v, \hat{t}, \alpha)$
   
   \If{$c \neq \text{Nil}$}{

      $\textit{TimeFeasible}, \textit{EnergyFeasible} \leftarrow \textbf{Feasible}(\mathcal{A}, v, \hat{t})$
      
      \uIf {$\textit{TimeFeasible} \land \textit{EnergyFeasible}$}
     {
         $\textit{AssignedCharging} \leftarrow \text{True}$
    
            \textbf{break}
     }\Else
     {
          $\calA \leftarrow \calA \setminus \{\langle v, c\rangle\}$

     }
   }

   }
   
        $\calE[v][\hat{t}] \leftarrow \textit{cost}$
 }

\If{$\textit{AssignedCharging}$}
{
\For {$\hat{t} \in \mathcal{T}$}
 {
 
 $\textit{cost} \leftarrow \infty$

         $\textit{TimeFeasible}, \textit{EnergyFeasible} \leftarrow \textbf{Feasible}(\mathcal{A}, v, \hat{t})$

\If {$\textit{TimeFeasible} \land \textit{EnergyFeasible}$}
{

        $\textit{cost} \leftarrow \textbf{BiasedCost}(\mathcal{A},v,\hat{t},\alpha)$

       }
         $\calE[v][\hat{t}] \leftarrow \textit{cost}$

 }

}
 \KwResult{$\calE, \calA$}
\end{algorithm}

\paragraph{Update} \cref{algo:update_cost_matrix} computes the energy costs of assigning bus $v$ to serve each one of the remaining service trips $t \in \calT$ and returns an updated matrix of energy costs $\calE$. 
If bus $v$ is an electric vehicle and unable serve any one of the remaining service trips due to low battery charge, then the algorithm tries to assign bus $v$ to a feasible charging slot $c$
\Aron{this does not make sense, why would you recompute for other trips, which could be served without recharging?}
%
and recomputes the energy costs for all trips. The time complexity of the algorithm is $\mathcal{O}\left(|\calT|\cdot|\calC|\cdot|\calX|\ln|\calX| \right)$.

\newpage
\section{Computational Complexity of Assignment and Scheduling Problem}
\label{app:complexity}

Here, we show that the problem of optimally assigning vehicles to transit trips and scheduling their charging (i.e., assigning to charging slots) is computationally hard.
First, we formulate a decision version of our optimization problem.

\begin{definition}[Transit Assignment and Scheduling Problem] 
Given 
\begin{itemize}
\item a set of liquid-fuel vehicle models $\calM^{\text{gas}}$,  
\item a set of electric vehicle models $\calM^{\text{elec}}$ with a battery capacity $C_m$ for each model $m \in \calM^{\text{elec}}$, 
\item a set of vehicles $\calV$ with a vehicle model $M_v \in \calM^{\text{gas}} \cup \calM^{\text{elec}}$ for each vehicle $v \in \calV$,
\item a set of locations $\calL$,
\item a set of transit trips $\calT$ with an origin $t^{\text{origin}} \in \calL$, destination $t^{\text{destination}} \in \calL$, start time $t^{\text{start}}$, and end time $t^{\text{end}}$ for each trip $t \in \calT$, 
\item a set of charging poles $\mathcal{CP}$ with a location $cp^{\text{location}}$ for each pole $cp \in \mathcal{CP}$,
\item a set of time slots $\calS$ with a start time $s^\text{start}$ and end time $s^\text{end}$ for each slot $s \in \calS$,
\item a charging performance $P(cp, m)$ for each charging pole $cp \in \mathcal{CP}$ and vehicle model $m \in \calM^{\text{elec}}$,
\item a duration $D(l_1, l_2)$ for each non-service trip $T(l_1, l_2)$ where $l_1, l_2 \in \calL$,
\item an energy usage value $E(v, t)$ for each vehicle $v \in \calV$ and for each transit trip $t \in \calT$ and each non-service trip $t = T(l_1, l_2)$ where $l_1, l_2 \in \calL$,
\item unit cost values $K^\text{gas}$ and $K^\text{elec}$ for liquid-fuel and electric energy usage,
\item and a threshold cost value $\textbf{Cost}^*$,
\end{itemize}
determine if there exists a set of assignments $\calA$ that satisfies Equations~\eqref{equ:constraint_1a} to~\eqref{eq:energy_constr} and $\textbf{Cost}(\calA) \leq \textbf{Cost}^*$, where $\textbf{Cost}(\calA)$ is the cost of assignments $\calA$ as defined in \cref{equ:obj_main}.
\end{definition}

We show that the above decision problem is NP-hard using a reduction from a well-known NP-hard problem, the 0-1 Knapsack Problem, which is defined as follows.

\begin{definition}[0-1 Knapsack Problem (Decision Version)]
Given a set of $N$ items, numbered from $1$ to $N$, with a value $b_i$ and weight $w_i$ for each item $i \in \{1, 2, \ldots, N\}$, a weight capacity $W$, and a threshold value $B^*$,
determine if there exists a subset of items $A \subseteq \{1, 2, \ldots, N\}$ satisfying $\sum_{i \in A} w_i \leq W$ (i.e., sum weight of items does not exceed the weight capacity) and $\sum_{i \in A} b_i \geq B^*$ (i.e., sum value of items reaches the threshold).
\end{definition}

Next, we formulate our computational complexity result.

\begin{theorem}
The Transit Assignment and Scheduling Problem is NP-hard.
\end{theorem}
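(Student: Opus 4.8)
The plan is to reduce from the 0-1 Knapsack Problem, turning items into transit trips that a single electric vehicle may or may not serve, while a fleet of liquid-fuel vehicles handles everything else at a known cost. First I would take an arbitrary Knapsack instance with items $1, \ldots, N$, values $b_i$, weights $w_i$, capacity $W$, and threshold $B^*$, and construct a Transit Assignment and Scheduling instance as follows. Create one electric vehicle $v_0$ of a model with battery capacity $C = W$ (and no charging poles, so $\mathcal{CP} = \emptyset$, $\calS = \emptyset$, $\calC = \emptyset$), and $N$ liquid-fuel vehicles $v_1, \ldots, v_N$. Create $N$ transit trips $t_1, \ldots, t_N$, all sharing a common origin and destination location so that every non-service trip has zero duration and zero energy (this makes the timing constraints \eqref{equ:constraint_1a}--\eqref{equ:constraint_1d} trivially non-binding — I would actually place all $N$ trips in the \emph{same} time window so that $v_0$ can serve at most one... no: I need $v_0$ to serve an arbitrary subset). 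The cleaner choice: give the trips disjoint, non-overlapping time windows $[3i, 3i+1]$ so that a single vehicle can serve any subset of them, and all locations coincide so non-service trips between consecutive trips cost nothing.

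Next I would set the energy usages. For the electric vehicle, let $E(v_0, t_i) = w_i$. For each liquid-fuel vehicle, let $E(v_j, t_i) = g_i$ for a value $g_i$ chosen below, so that assigning trip $t_i$ to a gas vehicle always costs $K^\text{gas} \cdot g_i$. The key is to make serving $t_i$ on the electric vehicle \emph{cheaper} than on a gas vehicle, by an amount proportional to $b_i$. Concretely, pick a large constant $G$ and set $g_i = G$ for all $i$, and set $K^\text{gas} = 1$, $K^\text{elec} = 0$ (electric energy is free in this instance), and choose $G$ large enough that $\sum_i G$ dominates. Then the total cost of a solution that assigns subset $A$ to $v_0$ (and the complement to distinct gas vehicles) is $\sum_{i \notin A} G = NG - |A| \cdot G$ — this only rewards serving \emph{many} trips, not high-value ones. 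To encode values I instead set $g_i = b_i + c$ for a shift $c = \max_i b_i$ ensuring $g_i > 0$; then the cost is $\sum_{i \notin A}(b_i + c) = \big(\sum_i (b_i+c)\big) - \sum_{i \in A}(b_i + c)$, and minimizing cost subject to $\sum_{i \in A} w_i \le C = W$ is exactly maximizing $\sum_{i \in A}(b_i + c)$ subject to the weight bound — which is not literally the Knapsack objective because of the $c$ per chosen item. The genuinely clean fix is to make each liquid-fuel vehicle dedicated to one trip and give \emph{each} gas cost exactly $b_i$ while electric cost is $0$: set $E(v_j, t_i) = b_i$, $K^\text{gas} = K^\text{elec} = 1$, $E(v_0, t_i) = 0$ in the objective but $w_i$ toward the battery. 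Since $E$ is a single function per trip, I would instead introduce the weight via the energy constraint \eqref{eq:energy_constr} using a "virtual" initial charge of $C = W$ and $E(v_0, t_i) = w_i$, while making the objective contribution of $v_0$'s trips negligible by scaling $K^\text{elec}$ down; the gas vehicles contribute $\sum_{i \notin A} b_i$. Then a solution of cost $\le \textbf{Cost}^* := \big(\sum_i b_i\big) - B^*$ exists iff there is $A$ with $\sum_{i \in A} w_i \le W$ and $\sum_{i \in A} b_i \ge B^*$.

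I would then verify the reduction formally in both directions: given a Knapsack solution $A$, assign $t_i$ to $v_0$ for $i \in A$ and to a fresh gas vehicle for $i \notin A$; the timing constraints hold because all non-service trips have zero duration, the energy constraint \eqref{eq:energy_constr} holds because the net battery draw $\sum_{i \in A} w_i \le W = C$ (using the virtual initial-charge slot), and the cost is $\sum_{i \notin A} b_i \le \textbf{Cost}^*$. Conversely, any feasible assignment of cost $\le \textbf{Cost}^*$ induces the set $A$ of trips served by $v_0$; feasibility of \eqref{eq:energy_constr} forces $\sum_{i \in A} w_i \le W$, and the cost bound forces $\sum_{i \in A} b_i \ge B^*$. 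Finally I would note the construction is polynomial in $N$ and the numeric sizes, completing the NP-hardness argument. The main obstacle is the modeling subtlety that energy usage $E$ is a function of the trip (and morally the vehicle model), so the per-trip "weight" for the battery and the per-trip "value" in the objective must both be routed through a single mechanism; resolving this by using a gas fleet whose objective cost encodes the values while the electric vehicle's battery capacity encodes the knapsack capacity (with weights as $E(v_0, \cdot)$ and a virtual initial-charge slot) is the crux of getting a clean equivalence, and writing out that correspondence carefully is where the real work lies.
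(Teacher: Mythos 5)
Your overall strategy---reducing from the 0-1 Knapsack Problem with the electric vehicle's battery capacity $W$ as the knapsack capacity, the electric energy usage $E(v^\text{elec}, t_i) = w_i$ as the weights, and the per-trip saving of using the electric rather than a liquid-fuel vehicle as the value $b_i$---is exactly the paper's reduction, and the forward/backward correspondence you sketch is the right one. The gaps are in the charging and time-slot setup, which is precisely where this reduction can silently fail. The battery constraint \eqref{eq:energy_constr} is only checked at the ends of time slots $s \in \calS$, and $e(\calA, v, s)$ only counts trips completed by $s^\text{end}$. You first declare $\mathcal{CP} = \calS = \calC = \emptyset$ and later rely on a ``virtual initial-charge slot,'' which is contradictory; more importantly, if the only time slot is the virtual one before the day starts, then \eqref{eq:energy_constr} is evaluated only at a moment when zero trips have been completed, so it never enforces $\sum_{i \in A} w_i \leq W$ and the electric vehicle may serve every trip. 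You therefore need time slots covering the entire day---and once those exist together with a charging pole, you must also prevent the electric vehicle from recharging in the gaps between its assigned trips, or it can again serve everything. The paper resolves both issues simultaneously: it uses $2N+1$ unit time slots, places the single charging pole at a location whose non-service trip to the service location takes $N$ time units, and packs the $N$ trips back-to-back in the second half of the day, so the electric vehicle can charge exactly once before service begins and never again.

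The second, smaller gap is your treatment of $K^\text{elec}$. Scaling it ``down'' but keeping it positive breaks the forward direction for the threshold $\textbf{Cost}^* = \sum_i b_i - B^*$, since the electric vehicle's own energy cost leaks into the objective; you would need either $K^\text{elec} = 0$ exactly, or an integrality argument with an adjusted threshold. The paper avoids the issue by keeping $K^\text{gas} = K^\text{elec} = 1$ and setting $E(v^\text{gas}, t_i) = w_i + b_i$ versus $E(v^\text{elec}, t_i) = w_i$, with $\textbf{Cost}^* = \sum_i (w_i + b_i) - B^*$, so that assigning $t_i$ to the electric vehicle saves exactly $b_i$ with no residual term to control.
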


\begin{proof}
We show that the Transit Assignment and Scheduling Problem (TASP) is at least as hard as the 0-1 Knapsack Problem (0-1KP).
Given an instance $(N, \langle b_i, w_i \rangle_{i\in\{1,\ldots,N\}}, W, B^*)$ of 0-1KP, we construct an instance of TASP as follows:
\begin{itemize}
    \item let there be a single liquid-fuel vehicle model, denoted $m^\text{gas}$ (i.e., $\calM^\text{gas} = \{m^\text{gas}\}$);
    \item let there be a single electric vehicle model, denoted $m^\text{elec}$ (i.e., $\calM^\text{elec} = \{m^\text{elec}\}$), with battery capacity $C_{m^\text{elec}} = W$;
    \item let there be two vehicles, denoted $v^\text{gas}$ and $v^\text{elec}$ (i.e., $\calV = \{ v^\text{gas}, v^\text{elect} \}$), with vehicle models $M_{v^\text{gas}} = m^\text{gas}$ and $M_{v^\text{elec}} = m^\text{elec}$;
    \item let there be two locations, denoted $l^\text{charge}$ and $l^\text{serve}$ (i.e., $\calL = \{ l^\text{charge}, l^\text{serve} \}$);
    \item let there be $N$ transit trips, denoted $t_1, t_2, \ldots, t_N$ with the same origins and destinations $t_i^\text{origin} = t_i^\text{destination} = l^\text{serve}$ and with start and end times $t_i^\text{start} = N + i$ and $t_i^\text{end} = N + i + 1$;
    \item let there be a single charging pole, denoted $cp$ (i.e., $\mathcal{CP} = \{cp\}$), with location $cp^\text{location} = l^\text{charge}$;
    \item let there be $2N + 1$ time slots, denoted $s_1, s_2, \ldots, s_{2N+1}$, with start and end times $s_i^\text{start} = i - 1$ and $s_i^\text{end} = i$;
    \item let the charging performance be $P(cp, m^\text{elec}) = W$;
    \item let the duration of both non-service trips be $D(l^\text{charge}, l^\text{serve}) = D(l^\text{serve}, l^\text{charge}) = N$;
    \item for the liquid-fuel vehicle $v^\text{gas}$, let the energy usage value of transit trip $t_i$ be $E(v^\text{gas}, t_i) = w_i + b_i$, and let the energy usage value of the non-service trips $T(l^\text{serve}, l^\text{charge})$ and $T(l^\text{charge}, l^\text{serve})$ both be $E(v^\text{gas}, T) = 0$;
    \item for the electric vehicle $v^\text{elec}$, let the energy usage value of transit trip $t_i$ be $E(v^\text{elec}, t_i) = w_i$, and let the energy usage value of the non-service trips $T(l^\text{serve}, l^\text{charge})$ and $T(l^\text{charge}, l^\text{serve})$ both be $E(v^\text{elec}, T) = 0$;
    \item let the unit energy costs be $K^\text{gas} = K^\text{elec} = 1$;
    \item let the threshold cost value be $\textbf{Cost}^* = \left[ \sum_{i \in \{1, \ldots, N\}} b_i + w_i \right] - B^*$.
\end{itemize}
Clearly, the above reduction can be performed in a polynomial number of steps. It remains to show that the constructed instance of TASP has a solution if and only if the 0-1KP instance has a solution.

First, suppose that the 0-1KP instance has a solution $A$.
Then, we show that there exists a feasible set of assignments $\calA$ that is a solution to the TASP instance.
Let $\calA = \{\langle v^\text{elec}, (cp, s_1) \rangle\} \cup \bigcup_{i \in A} \{\langle v^\text{elec}, t_i \rangle\} \cup \bigcup_{i \not\in A} \{\langle v^\text{gas}, t_i \rangle\}$.
In other words, charge the electric vehicle in the first time slot, then assign it to all the trips that correspond to items in $A$, and assign the liquid-fuel vehicle to all other trips.
This assignment clearly satisfies Equations \eqref{equ:constraint_1a} to \eqref{equ:constraint_1d} since all transit trips share the same origin and destination, and the only non-service trip $T(l^\text{charge}, l^\text{gas})$ is taken by the electric vehicle $v^\text{elec}$, which has enough time between the end of charging at time $s_1^\text{end} = 1$ and the beginning of the first trip $t_1^\text{start} = N + 1$ for the non-service trip duration $D(l^\text{charge}, l^\text{serve}) = N$.
Further, the assignment also satisfies Equation \eqref{eq:energy_constr} since the electric vehicle $v^\text{elec}$ is fully charged to $P(cp, m^\text{elec}) = W$ in the first time slot, and the transit trips that it serves use 
\begin{align}
\sum_{\langle v^\text{elec}, t_i\rangle\in\calA} E(v^\text{elec}, t_i) 
&= \sum_{\langle v^\text{elec}, t_i\rangle\in\calA} w_i \\
&= \sum_{i \in A} w_i \\
&\leq W .  
\end{align}
Finally, the cost $\textbf{Cost}(\calA)$ of this assignment $\calA$ is at most $\textbf{Cost}^*$ since
\begin{align}
    \textbf{Cost}(\calA) &= \sum_{\mathclap{\langle v^\text{gas}, t_i\rangle \in \calA}} K^\text{gas} \cdot E(v^\text{gas}, t_i) + \sum_{\mathclap{\langle v^\text{elec}, t_i\rangle \in \calA}} K^\text{elec} \cdot E(v^\text{elec}, t_i) \label{equ:ComplexityProof1} \\
    &= \sum_{\mathclap{\langle v^\text{gas}, t_i\rangle \in \calA}}  E(v^\text{gas}, t_i) + \sum_{\mathclap{\langle v^\text{elec}, t_i\rangle \in \calA}}  E(v^\text{elec}, t_i) \\
    &= \sum_{i \not\in A}  E(v^\text{gas}, t_i) + \sum_{i \in A}  E(v^\text{elec}, t_i) \\
    &= \left[ \sum_{i \not\in A}  w_i + b_i \right] + \left[ \sum_{i \in A}  w_i \right] \\
    &= \left[ \sum_{i \in \{1, \ldots, N\}}  w_i + b_i \right] - \underbrace{\left[ \sum_{i \in A}  b_i \right]}_{\geq B^*} \\
    &\leq \left[ \sum_{i \in \{1, \ldots, N\}}  w_i + b_i \right] - B* \\
    &= \textbf{Cost}^* . \label{equ:ComplexityProof2}
\end{align}
Therefore, if the 0-1KP instance has a solution, then so does the constructed instance of TASP.

\vspace{0.5em}
Second, suppose that the constructed instance of TASP has a solution $\calA$.
Then, we show that there exists a subset of items $A$ that is a solution to the 0-1KP instance.
Let $A = \left\{ i \middle| \langle v^\text{elec}, t_i \rangle \in \calA \right\}$.
In other words, select the items that correspond to transit trips that are assigned to the electric vehicle $v^\text{elect}$ by solution $\calA$.
Then, the sum value $\sum_{i\in A} b_i$ of the set $A$ reaches the threshold value $B^*$ since
\begin{tiny}
\allowdisplaybreaks
\begin{align}
    \sum_{i\in A} b_i &= \left[ \sum_{i \in \{1, \ldots N\}} w_i + b_i \right] - \left[ \sum_{i \in \{1, \ldots N\}} w_i + b_i \right] + \sum_{i \in A} b_i  \\
    &= \left[ \sum_{i \in \{1, \ldots N\}} w_i + b_i \right] - \left[ \sum_{i \not\in A} w_i + b_i \right] - \sum_{i \in A} w_i  \\
    &= \left[ \sum_{i \in \{1, \ldots N\}} w_i + b_i \right] -  \sum_{i \not\in A} E(v^\text{gas}, t_i)  -  \sum_{i \in A} E(v^\text{elec}, t_i) \\
    &= \left[ \sum_{i \in \{1, \ldots N\}} w_i + b_i \right] -  \sum_{i \not\in A} K^\text{gas} \!\cdot\! E(v^\text{gas}, t_i)  -  \sum_{i \in A} K^\text{elec} \!\cdot\! E(v^\text{elec}, t_i) \\
    &= \left[ \sum_{i \in \{1, \ldots N\}} w_i + b_i \right] \underbrace{- ~~~~\sum_{\mathclap{\langle v^\text{gas}, t_i\rangle \in \calA}} K^\text{gas} \!\cdot\! E(v^\text{gas}, t_i)  -  \sum_{\mathclap{\langle v^\text{elec}, t_i\rangle \in \calA}} K^\text{elec} \!\cdot\! E(v^\text{elec}, t_i)}_{= -\textbf{Cost}(\calA)} \\
    &= \left[ \sum_{i \in \{1, \ldots N\}} w_i + b_i \right] - \underbrace{\textbf{Cost}(\calA)}_{\leq \textbf{Cost}^*} \\
    &\geq \left[ \sum_{i \in \{1, \ldots N\}} w_i + b_i \right] - \textbf{Cost}^* \\
    &= \left[ \sum_{i \in \{1, \ldots N\}} w_i + b_i \right] -
    \left[ \sum_{i \in \{1, \ldots, N\}} b_i + w_i \right] + B^* \\
    &= B^* .
\end{align}
\end{tiny}
It remains to show that the sum weight $\sum_{i \in A} w_i$ of the set $A$ does not exceed the weight capacity $W$.
Due to the duration of the non-service trip between $l^\text{charge}$ and $l^\text{serve}$, the feasible set of assignments $\calA$ cannot charge the electric vehicle $v^\text{elec}$ between two transit trip.
Thus, the total energy usage of the electric vehicle cannot exceed its battery capacity $W$.
We can use this to prove that set $A$ does not exceed the weight capacity as
\begin{align}
    \sum_{i \in A} w_i &= \sum_{i \in A} E(v^\text{elec}, t_i) \\
    &= \sum_{\langle v^\text{elec}, t_i\rangle \in \calA} E(v^\text{elec}, t_i) \\
    &\leq W .
\end{align}
Therefore, if the constructed TASP instance has a solution, then so does the 0-1KP instance, which concludes our proof.
\end{proof}

\section{Further Numerical Results}
\label{app:numerical}

\paragraph{CO$_2$ Emission}

In \cref{sec:numerical}, we evaluated the proposed algorithms based on running time and solution quality, which we measured as the energy costs incurred by the transit agency.
Here, we complement those results by evaluating the algorithms based on environmental impact, specifically, CO$_2$ emissions.

\pgfplotstableread[col sep=comma,]{data/small/co2_emission/greedy_cons.csv}\greedycons
\pgfplotstableread[col sep=comma,]{data/small/co2_emission/sim_anneal_cons.csv}\simannealcons
\pgfplotstableread[col sep=comma,]{data/small/co2_emission/ip_cons.csv}\ipcons
\pgfplotstableread[col sep=comma,]{data/small/co2_emission/gen_alg_cons.csv}\gacons

\begin{revision}
\begin{figure}[h!]
\begin{tikzpicture}
\begin{axis}[
      boxplot/draw direction=y,
      xtick={1,2,3,4,5,6,7,8,9,10,11,12},
      width=\columnwidth,
      height = 6cm,
      bugsResolvedStyle/.style={},
      ylabel={CO$_{2}$ Emission [kg]},
      xlabel={Number of Bus Lines},
      ymin=0,
      ymax=7000,
            legend pos=north west,
      legend cell align={left},
    ]
    \addlegendimage{ColorLegendIP, only marks, mark=square*};
    \addlegendentry{\,Integer Program};
    \addlegendimage{ColorLegendSimAnn, only marks, mark=square*};
    \addlegendentry{\,Simulated Anneal.};
    \addlegendimage{ColorLegendGreedy, only marks, mark=square*}
    \addlegendentry{\,Greedy Algorithm};
    \addlegendimage{ColorLegendGenAlg, only marks, mark=square*}
    \addlegendentry{\,Genetic Algorithm};
    
    \addplot+[boxplot={box extend=0.25, draw position=1},ColorGreedy, rshift, solid, fill=ColorGreedy!20, mark=x] table [col sep=comma, y=line_1] {\greedycons};
\addplot+[boxplot={box extend=0.25, draw position=2},ColorGreedy, rshift, solid, fill=ColorGreedy!20, mark=x] table [col sep=comma, y=line_2] {\greedycons};
\addplot+[boxplot={box extend=0.25, draw position=3},ColorGreedy, rshift, solid, fill=ColorGreedy!20, mark=x] table [col sep=comma, y=line_3] {\greedycons};
\addplot+[boxplot={box extend=0.25, draw position=4},ColorGreedy, rshift, solid, fill=ColorGreedy!20, mark=x] table [col sep=comma, y=line_4] {\greedycons};
\addplot+[boxplot={box extend=0.25, draw position=5},ColorGreedy, rshift, solid, fill=ColorGreedy!20, mark=x] table [col sep=comma, y=line_5] {\greedycons};
\addplot+[boxplot={box extend=0.25, draw position=6},ColorGreedy, rshift, solid, fill=ColorGreedy!20, mark=x] table [col sep=comma, y=line_6] {\greedycons};
    \addplot+[boxplot={box extend=0.25, draw position=7},ColorGreedy, rshift, solid, fill=ColorGreedy!20, mark=x] table [col sep=comma, y=line_7] {\greedycons};
\addplot+[boxplot={box extend=0.25, draw position=8},ColorGreedy, rshift, solid, fill=ColorGreedy!20, mark=x] table [col sep=comma, y=line_8] {\greedycons};
\addplot+[boxplot={box extend=0.25, draw position=9},ColorGreedy, rshift, solid, fill=ColorGreedy!20, mark=x] table [col sep=comma, y=line_9] {\greedycons};
\addplot+[boxplot={box extend=0.25, draw position=10},ColorGreedy, rshift, solid, fill=ColorGreedy!20, mark=x] table [col sep=comma, y=line_10] {\greedycons};
\addplot+[boxplot={box extend=0.25, draw position=11},ColorGreedy, rshift, solid, fill=ColorGreedy!20, mark=x] table [col sep=comma, y=line_11] {\greedycons};
\addplot+[boxplot={box extend=0.25, draw position=12},ColorGreedy, rshift, solid, fill=ColorGreedy!20, mark=x] table [col sep=comma, y=line_12] {\greedycons};
    \addplot+[boxplot={box extend=0.25, draw position=1}, ColorSimAnn, solid, fill=ColorSimAnn!20, mark=x] table [col sep=comma, y=line_1] {\simannealcons};
\addplot+[boxplot={box extend=0.25, draw position=2}, ColorSimAnn, solid, fill=ColorSimAnn!20, mark=x] table [col sep=comma, y=line_2] {\simannealcons};
\addplot+[boxplot={box extend=0.25, draw position=3}, ColorSimAnn, solid, fill=ColorSimAnn!20, mark=x] table [col sep=comma, y=line_3] {\simannealcons};
\addplot+[boxplot={box extend=0.25, draw position=4}, ColorSimAnn, solid, fill=ColorSimAnn!20, mark=x] table [col sep=comma, y=line_4] {\simannealcons};
\addplot+[boxplot={box extend=0.25, draw position=5}, ColorSimAnn, solid, fill=ColorSimAnn!20, mark=x] table [col sep=comma, y=line_5] {\simannealcons};
\addplot+[boxplot={box extend=0.25, draw position=6}, ColorSimAnn, solid, fill=ColorSimAnn!20, mark=x] table [col sep=comma, y=line_6] {\simannealcons};
    \addplot+[boxplot={box extend=0.25, draw position=7}, ColorSimAnn, solid, fill=ColorSimAnn!20, mark=x] table [col sep=comma, y=line_7] {\simannealcons};
\addplot+[boxplot={box extend=0.25, draw position=8}, ColorSimAnn, solid, fill=ColorSimAnn!20, mark=x] table [col sep=comma, y=line_8] {\simannealcons};
\addplot+[boxplot={box extend=0.25, draw position=9}, ColorSimAnn, solid, fill=ColorSimAnn!20, mark=x] table [col sep=comma, y=line_9] {\simannealcons};
\addplot+[boxplot={box extend=0.25, draw position=10}, ColorSimAnn, solid, fill=ColorSimAnn!20, mark=x] table [col sep=comma, y=line_10] {\simannealcons};
\addplot+[boxplot={box extend=0.25, draw position=11}, ColorSimAnn, solid, fill=ColorSimAnn!20, mark=x] table [col sep=comma, y=line_11] {\simannealcons};
\addplot+[boxplot={box extend=0.25, draw position=12}, ColorSimAnn, solid, fill=ColorSimAnn!20, mark=x] table [col sep=comma, y=line_12] {\simannealcons};
\addplot+[boxplot={box extend=0.25, draw position=1}, ColorIP, lshift, solid, fill=ColorIP!20, mark=x] table [col sep=comma, y=line_1] {\ipcons};
\addplot+[boxplot={box extend=0.25, draw position=2}, ColorIP, lshift, solid, fill=ColorIP!20, mark=x] table [col sep=comma, y=line_2] {\ipcons};
\addplot+[boxplot={box extend=0.25, draw position=3}, ColorIP, lshift, solid, fill=ColorIP!20, mark=x] table [col sep=comma, y=line_3] {\ipcons};
\addplot+[boxplot={box extend=0.25, draw position=4}, ColorIP, lshift, solid, fill=ColorIP!20, mark=x] table [col sep=comma, y=line_4] {\ipcons};
\addplot+[boxplot={box extend=0.25, draw position=5}, ColorIP, lshift, solid, fill=ColorIP!20, mark=x] table [col sep=comma, y=line_5] {\ipcons};
\addplot+[boxplot={box extend=0.25, draw position=6}, ColorIP, lshift, solid, fill=ColorIP!20, mark=x] table [col sep=comma, y=line_6] {\ipcons};
\addplot+[boxplot={box extend=0.25, draw position=7}, ColorIP, lshift, solid, fill=ColorIP!20, mark=x] table [col sep=comma, y=line_7] {\ipcons};
\addplot+[boxplot={box extend=0.25, draw position=8}, ColorIP, lshift, solid, fill=ColorIP!20, mark=x] table [col sep=comma, y=line_8] {\ipcons};
\addplot+[boxplot={box extend=0.25, draw position=1}, ColorGenAlg, rshift2, solid, fill=ColorGenAlg!20, mark=x] table [col sep=comma, y=line_1] {\gacons};
\addplot+[boxplot={box extend=0.25, draw position=2}, ColorGenAlg, rshift2, solid, fill=ColorGenAlg!20, mark=x] table [col sep=comma, y=line_2] {\gacons};
\addplot+[boxplot={box extend=0.25, draw position=3}, ColorGenAlg, rshift2, solid, fill=ColorGenAlg!20, mark=x] table [col sep=comma, y=line_3] {\gacons};
\addplot+[boxplot={box extend=0.25, draw position=4}, ColorGenAlg, rshift2, solid, fill=ColorGenAlg!20, mark=x] table [col sep=comma, y=line_4] {\gacons};
\addplot+[boxplot={box extend=0.25, draw position=5}, ColorGenAlg, rshift2, solid, fill=ColorGenAlg!20, mark=x] table [col sep=comma, y=line_5] {\gacons};
\addplot+[boxplot={box extend=0.25, draw position=6}, ColorGenAlg, rshift2, solid, fill=ColorGenAlg!20, mark=x] table [col sep=comma, y=line_6] {\gacons};
\addplot+[boxplot={box extend=0.25, draw position=7}, ColorGenAlg, rshift2, solid, fill=ColorGenAlg!20, mark=x] table [col sep=comma, y=line_7] {\gacons};
\addplot+[boxplot={box extend=0.25, draw position=8}, ColorGenAlg, rshift2, solid, fill=ColorGenAlg!20, mark=x] table [col sep=comma, y=line_8] {\gacons};
\addplot+[boxplot={box extend=0.25, draw position=9}, ColorGenAlg, rshift2, solid, fill=ColorGenAlg!20, mark=x] table [col sep=comma, y=line_9] {\gacons};
\addplot+[boxplot={box extend=0.25, draw position=10}, ColorGenAlg, rshift2, solid, fill=ColorGenAlg!20, mark=x] table [col sep=comma, y=line_10] {\gacons};
\addplot+[boxplot={box extend=0.25, draw position=11}, ColorGenAlg, rshift2, solid, fill=ColorGenAlg!20, mark=x] table [col sep=comma, y=line_11] {\gacons};
\addplot+[boxplot={box extend=0.25, draw position=12}, ColorGenAlg, rshift2, solid, fill=ColorGenAlg!20, mark=x] table [col sep=comma, y=line_12] {\gacons};
7.07 \end{axis}
\end{tikzpicture}
\caption{CO$_{2}$ emissions for assignments using integer program (\textcolor{ColorLegendIP}{$\blacksquare$}), simulated annealing (\textcolor{ColorLegendSimAnn}{$\blacksquare$}), genetic algorithm (\textcolor{ColorLegendGenAlg}{$\blacksquare$}) and greedy algorithm (\textcolor{ColorLegendGreedy}{$\blacksquare$}).}	
\label{fig:cons_all}
\end{figure}
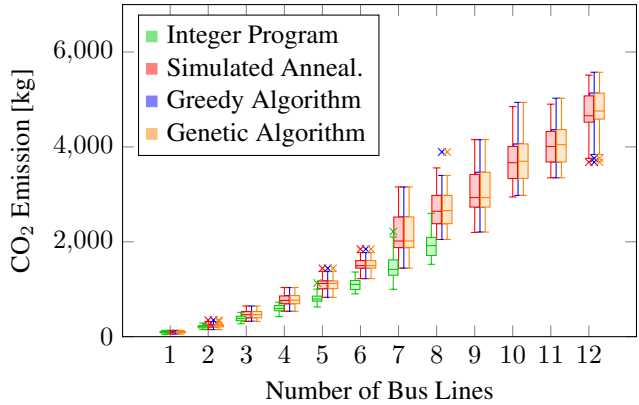

\end{revision}

\cref{fig:cons_all} shows CO$_2$ emissions for the integer program, simulated annealing, genetic algorithm and the greedy algorithm with problem instances of various sizes (i.e., number of bus lines).
For each problem size, we evaluate the algorithms on 35 random samples with the same setting as in \cref{fig:cost_all} of \cref{sec:numerical}.
We observe that the results for energy costs and CO$_2$ emissions are very similar, suggesting that we can reduce both, leading to more affordable and environment-friendly transit.

\pgfplotstableread[col sep=comma,]{data/full/co2_emission/greedy_co2_full.csv}\greedyco
\pgfplotstableread[col sep=comma,]{data/full/co2_emission/real_co2_full.csv}\realco
\pgfplotstableread[col sep=comma,]{data/full/co2_emission/sim_anneal_co2_full.csv}\saco
\pgfplotstableread[col sep=comma,]{data/full/co2_emission/gen_algo_co2_full.csv}\gaco

\begin{revision}
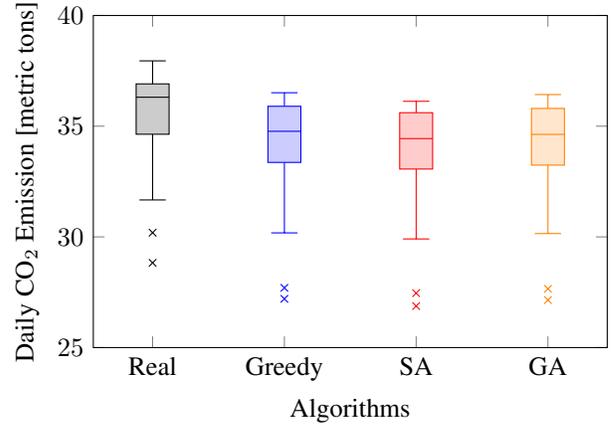
\begin{figure}[h!]
\begin{tikzpicture}
\begin{axis}[
      boxplot/draw direction=y,
      xtick={1,2,3,4},
      xticklabels={{Real}, {Greedy}, {SA}, {GA}},
      width=\columnwidth,
      height = 6cm,
      bugsResolvedStyle/.style={},
      ylabel={Daily CO$_{2}$ Emission [metric tons]},
      xlabel={Algorithms},
      ymin=25,
      ymax=40,
    ]
\addplot+[boxplot={box extend=0.25, draw position=1}, ColorReal, solid, fill=ColorReal!20, mark=x] table [col sep=comma, y=real] {\realco};
\addplot+[boxplot={box extend=0.25, draw position=2},ColorGreedy, solid, fill=ColorGreedy!20, mark=x] table [col sep=comma, y=greedy] {\greedyco};
\addplot+[boxplot={box extend=0.25, draw position=3}, ColorSimAnn, solid, fill=ColorSimAnn!20, mark=x] table [col sep=comma, y=sim_anneal] {\saco};
\addplot+[boxplot={box extend=0.25, draw position=4}, ColorGenAlg, solid, fill=ColorGenAlg!20, mark=x] table [col sep=comma, y=gen_alg] {\gaco};
\end{axis}
\end{tikzpicture}
\caption{CO$_2$ emissions for assignments using greedy algorithm (\textcolor{ColorLegendGreedy}{$\blacksquare$}), simulated annealing (\textcolor{ColorLegendSimAnn}{$\blacksquare$}) and genetic algorithm (\textcolor{ColorLegendGenAlg}{$\blacksquare$}) for complete daily schedules,
compared to existing real-world assignments (\textcolor{ColorLegendReal}{$\blacksquare$}).}
	\label{fig:co2_full_sch}
\end{figure}
\end{revision}

\cref{fig:co2_full_sch} shows CO$_2$ emissions using the greedy, simulated annealing and genetic algorithm for 50 different sample days, assigning 3 electric and 50 liquid-fuel buses to the complete daily schedule of the agency.
We use the same settings as in \cref{fig:cost_full_sch} of \cref{sec:numerical} and again compare to the existing real-world assignments.
Similar to the smaller problem instances, we observe that the results for energy costs and CO$_2$ emissions are almost identical.


\begin{revision}
\pgfplotstableread[col sep=comma,]{data/small/costs/greedy_diff_costs.csv}\greedycost
\pgfplotstableread[col sep=comma,]{data/small/costs/sim_anneal_diff_costs.csv}\simannealcost
\pgfplotstableread[col sep=comma,]{data/small/costs/gen_alg_diff_costs.csv}\gacost

\begin{revision}

\begin{figure}
\begin{tikzpicture}
\begin{axis}[
      boxplot/draw direction=y,
      xtick={1,2,3,4,5,6,7,8},
      width=\columnwidth,
      height = 5cm,
      bugsResolvedStyle/.style={},
      ylabel={Energy Cost},
      xlabel={Number of Bus Lines},
      ymin=95,
      ymax=250,
      yticklabel=\pgfmathprintnumber{\tick}\,$\%$,
      legend pos=north west,
      legend columns=3,
      legend cell align={left},
    ]
    \addlegendimage{ColorLegendSimAnn, only marks, mark=square*};
    \addlegendentry{\,Sim. Anneal.};
    \addlegendimage{ColorLegendGreedy, only marks, mark=square*}
    \addlegendentry{\,Greedy Alg.};
    \addlegendimage{ColorLegendGenAlg, only marks, mark=square*}
    \addlegendentry{\,Gen. Alg.};
    
\addplot+[boxplot={box extend=0.25, draw position=1},ColorGreedy, rshift2, solid, fill=ColorGreedy!20, mark=x] table [col sep=comma, y=line_1] {\greedycost};
\addplot+[boxplot={box extend=0.25, draw position=2},ColorGreedy, rshift2, solid, fill=ColorGreedy!20, mark=x] table [col sep=comma, y=line_2] {\greedycost};
\addplot+[boxplot={box extend=0.25, draw position=3},ColorGreedy, rshift2, solid, fill=ColorGreedy!20, mark=x] table [col sep=comma, y=line_3] {\greedycost};
\addplot+[boxplot={box extend=0.25, draw position=4},ColorGreedy, rshift2, solid, fill=ColorGreedy!20, mark=x] table [col sep=comma, y=line_4] {\greedycost};
\addplot+[boxplot={box extend=0.25, draw position=5},ColorGreedy, rshift2, solid, fill=ColorGreedy!20, mark=x] table [col sep=comma, y=line_5] {\greedycost};
\addplot+[boxplot={box extend=0.25, draw position=6},ColorGreedy, rshift2, solid, fill=ColorGreedy!20, mark=x] table [col sep=comma, y=line_6] {\greedycost};
\addplot+[boxplot={box extend=0.25, draw position=7},ColorGreedy, rshift2, solid, fill=ColorGreedy!20, mark=x] table [col sep=comma, y=line_7] {\greedycost};
\addplot+[boxplot={box extend=0.25, draw position=8},ColorGreedy, rshift2, solid, fill=ColorGreedy!20, mark=x] table [col sep=comma, y=line_8] {\greedycost};

\addplot+[boxplot={box extend=0.25, draw position=1}, ColorSimAnn, solid, fill=ColorSimAnn!20, mark=x] table [col sep=comma, y=line_1] {\simannealcost};
\addplot+[boxplot={box extend=0.25, draw position=2}, ColorSimAnn, solid, fill=ColorSimAnn!20, mark=x] table [col sep=comma, y=line_2] {\simannealcost};
\addplot+[boxplot={box extend=0.25, draw position=3}, ColorSimAnn, solid, fill=ColorSimAnn!20, mark=x] table [col sep=comma, y=line_3] {\simannealcost};
\addplot+[boxplot={box extend=0.25, draw position=4}, ColorSimAnn, solid, fill=ColorSimAnn!20, mark=x] table [col sep=comma, y=line_4] {\simannealcost};
\addplot+[boxplot={box extend=0.25, draw position=5}, ColorSimAnn, solid, fill=ColorSimAnn!20, mark=x] table [col sep=comma, y=line_5] {\simannealcost};
\addplot+[boxplot={box extend=0.25, draw position=6}, ColorSimAnn, solid, fill=ColorSimAnn!20, mark=x] table [col sep=comma, y=line_6] {\simannealcost};
\addplot+[boxplot={box extend=0.25, draw position=7}, ColorSimAnn, solid, fill=ColorSimAnn!20, mark=x] table [col sep=comma, y=line_7] {\simannealcost};
\addplot+[boxplot={box extend=0.25, draw position=8}, ColorSimAnn, solid, fill=ColorSimAnn!20, mark=x] table [col sep=comma, y=line_8] {\simannealcost};

\addplot+[boxplot={box extend=0.25, draw position=1}, ColorGenAlg, lshift2, solid, fill=ColorGenAlg!20, mark=x] table [col sep=comma, y=line_1] {\gacost};
\addplot+[boxplot={box extend=0.25, draw position=2}, ColorGenAlg, lshift2, solid, fill=ColorGenAlg!20, mark=x] table [col sep=comma, y=line_2] {\gacost};
\addplot+[boxplot={box extend=0.25, draw position=3}, ColorGenAlg, lshift2, solid, fill=ColorGenAlg!20, mark=x] table [col sep=comma, y=line_3] {\gacost};
\addplot+[boxplot={box extend=0.25, draw position=4}, ColorGenAlg, lshift2, solid, fill=ColorGenAlg!20, mark=x] table [col sep=comma, y=line_4] {\gacost};
\addplot+[boxplot={box extend=0.25, draw position=5}, ColorGenAlg, lshift2, solid, fill=ColorGenAlg!20, mark=x] table [col sep=comma, y=line_5] {\gacost};
\addplot+[boxplot={box extend=0.25, draw position=6}, ColorGenAlg, lshift2, solid, fill=ColorGenAlg!20, mark=x] table [col sep=comma, y=line_6] {\gacost};
\addplot+[boxplot={box extend=0.25, draw position=7}, ColorGenAlg, lshift2, solid, fill=ColorGenAlg!20, mark=x] table [col sep=comma, y=line_7] {\gacost};
\addplot+[boxplot={box extend=0.25, draw position=8}, ColorGenAlg, lshift2, solid, fill=ColorGenAlg!20, mark=x] table [col sep=comma, y=line_8] {\gacost};
7.07 \end{axis}
\end{tikzpicture}
\caption{Energy cost for assignments using simulated annealing (\textcolor{ColorLegendSimAnn}{$\blacksquare$}), genetic algorithm~(\textcolor{ColorLegendGenAlg}{$\blacksquare$}) and the greedy algorithm~(\textcolor{ColorLegendGreedy}{$\blacksquare$}) compared to optimal assignments (found using the integer program).}
	\label{fig:cost_all_with_ga}
\end{figure}
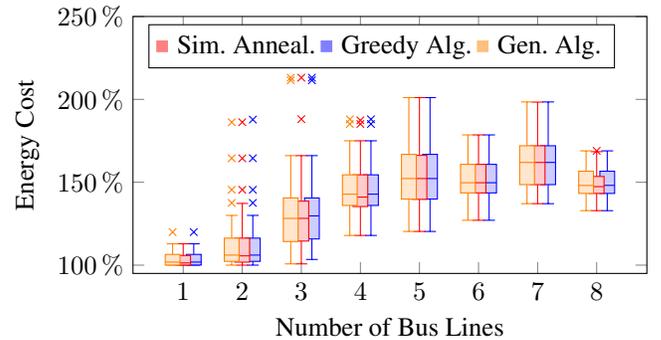

\end{revision}
\cref{fig:cost_all_with_ga} is extension to the \cref{fig:cost_all} in the main paper including the results of Genetic Algorithm. All three of approaches are not perform well compared to the IP, but the cost ratio between IP and our heuristic remains between the range of 1.5 to 1.6 even for larger instances.

\pgfplotstableread[col sep=comma,]{data/full/costs/greedy_diff_costs.csv}\greedyco
\pgfplotstableread[col sep=comma,]{data/full/costs/sim_anneal_diff_costs.csv}\saco
\pgfplotstableread[col sep=comma,]{data/full/costs/gen_algo_diff_costs.csv}\gaco

\begin{revision}
\begin{figure}
\begin{tikzpicture}
\begin{axis}[
      boxplot/draw direction=y,
      xtick={1,2,3},
      xticklabels={{Greedy}, {Sim. Ann.}, {Gen. Algo.}},
      width=\columnwidth,
      height = 4cm,
      bugsResolvedStyle/.style={},
      ylabel={Energy Cost},
      xlabel={Algorithms},
      yticklabel=\pgfmathprintnumber{\tick}\,$\%$,
      ymin=90,
      ymax=100,
    ]
\addplot+[boxplot={box extend=0.25, draw position=1},ColorGreedy, solid, fill=ColorGreedy!20, mark=x] table [col sep=comma, y=greedy] {\greedyco};
\addplot+[boxplot={box extend=0.25, draw position=2}, ColorSimAnn, solid, fill=ColorSimAnn!20, mark=x] table [col sep=comma, y=sim_anneal] {\saco};
\addplot+[boxplot={box extend=0.25, draw position=3}, ColorGenAlg, solid, fill=ColorGenAlg!20, mark=x] table [col sep=comma, y=gen_algo] {\gaco};
\end{axis}
\end{tikzpicture}
\caption{Energy costs for assignments using the greedy algorithm (\textcolor{ColorLegendGreedy}{$\blacksquare$}), simulated annealing (\textcolor{ColorLegendSimAnn}{$\blacksquare$}) and genetic algorithm (\textcolor{ColorLegendGenAlg}{$\blacksquare$}) for complete daily schedules,
compared to existing real-world assignments.}
\label{fig:cost_full_sch_with_ga}
\end{figure}
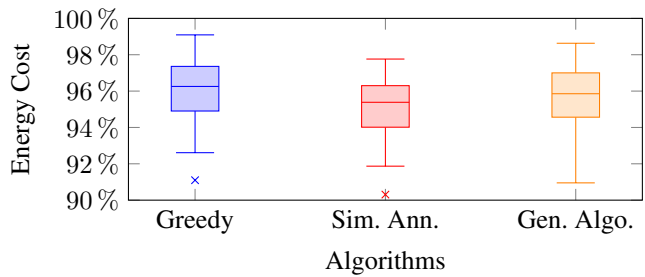
\end{revision}
\cref{fig:cost_full_sch_with_ga} is extension to the \cref{fig:cost_full_sch} in the main paper including the results of Genetic Algorithm. The results shows that though genetic algorithm slightly improves the solution obtained from the greedy algorithm, the improvement is less significant with respect to simulated annealing algorithm.
\end{revision}

\begin{revision}
\paragraph{Parameter Tuning}

\begin{revision}
\begin{table*}[!ht]
    \centering
\begin{tabular}{|c|c|c|c|c|c|c|c|c|c|c|}
    \hline
    \multicolumn{2}{|c|}{\multirow{2}{*}{}} 
    & \multicolumn{9}{c|}{Wait-time factor  of liquid-fuel buses $\alpha_\text{gas}$}
    \\\cline{3-11}
    \multicolumn{2}{|c|}{} & \bfseries 0.0001 & \bfseries 0.0002 & \bfseries 0.0003 & \bfseries 0.0004 & \bfseries 0.0005 & \bfseries 0.0006 & \bfseries 0.0007 & \bfseries 0.0008 & \bfseries 0.0009
    \\\hline
    \multirow{9}{*}{\shortstack[l]{Wait-time factor\\ of electric buses\\ $\alpha_\text{electric}$}}
&  \textbf{0.001} &     8496.6 &      8378.1 &     8574.5 &     8619.1 &     8639.2 &     8614.4 &     8603.2 &     8605.5 &     8578.0\\\cline{2-11}
&  \textbf{0.002} &     8494.9 &     8553.1 &     8583.0 &     8613.4 &     8633.1 &     8625.1 &     8605.5 &     8589.5 &     8575.7\\\cline{2-11}
&  \textbf{0.003} &     8588.1 &     8621.6 &     8661.8 &     8663.8 &     8667.8 &     8654.5 &     8669.5 &     8636.5 &     8619.0\\\cline{2-11}
&  \textbf{0.004} &     9072.7 &     8980.1 &     8986.1 &     9027.4 &     9037.4 &     8939.3 &     8945.2 &     8839.2 &     8814.6\\\cline{2-11}
&  \textbf{0.005} &     9201.5 &     9137.3 &     9133.2 &     9155.8 &     9165.1 &     9173.8 &     9105.7 &     9054.1 &     9052.1\\\cline{2-11}
&  \textbf{0.006} &     9221.1 &     9157.4 &     9172.3 &     9183.0 &     9182.7 &     9179.6 &     9150.1 &     9133.9 &     9121.3\\\cline{2-11}
&  \textbf{0.007} &     9221.1 &     9157.4 &     9172.1 &     9199.1 &     9200.2 &     9179.6 &     9167.3 &     9159.2 &     9155.5\\\cline{2-11}
&  \textbf{0.008} &     9221.1 &     9157.4 &     9180.3 &     9207.3 &     9200.2 &     9207.0 &     9188.8 &     9181.1 &     9155.5\\\cline{2-11}
&  \textbf{0.009} &     9221.1 &     9157.4 &     9180.3 &     9207.3 &     9208.4 &     9207.0 &     9188.8 &     9181.1 &     9176.1\\\hline
    \end{tabular}
        \caption{Energy costs for assigning vehicles to complete daily transit schedules using the greedy algorithm with different wait-time factors $\alpha_\text{electric}$ and $\alpha_\text{gas}$ for electric and liquid-fuel buses, respectively. Each value is the average of assignments for 6 different sample days using the full fleet of the agency.}
    \label{fig:weights_tuning}
\end{table*}
\end{revision}
\begin{revision}
\begin{table*}[!ht]
    \centering
\begin{tabular}{|c|c|c|c|c|c|c|c|c|c|c|}
    \hline
    \multicolumn{2}{|c|}{\multirow{2}{*}{}} 
    & \multicolumn{9}{c|}{Wait-time factor  of liquid-fuel buses $\alpha_\text{gas}$}
    \\\cline{3-11}
    \multicolumn{2}{|c|}{} & \bfseries 1$\cdot 10^{-5}$ & \bfseries 2$\cdot 10^{-5}$ & \bfseries 3$\cdot 10^{-5}$ & \bfseries 4$\cdot 10^{-5}$ & \bfseries 5$\cdot 10^{-5}$ & \bfseries 6$\cdot 10^{-5}$ & \bfseries 7$\cdot 10^{-5}$ & \bfseries 8$\cdot 10^{-5}$ & \bfseries 9$\cdot 10^{-5}$
    \\\hline
    \multirow{9}{*}{\shortstack[l]{Wait-time factor\\ of electric buses\\ $\alpha_\text{electric}$}}
&  \textbf{1$\cdot 10^{-5}$} &      516.9 &      517.0 &      516.3 &      516.0 &      516.1 &      516.4 &      516.5 &      517.0 &      517.0\\\cline{2-11}
&  \textbf{2$\cdot 10^{-5}$} &      513.7 &      513.5 &      512.9 &      512.7 &      513.0 &      513.4 &      513.7 &      513.8 &      513.9\\\cline{2-11}
&  \textbf{3$\cdot 10^{-5}$} &      516.3 &      516.1 &      515.5 &      515.3 &      515.5 &      516.0 &      516.2 &      516.2 &      516.3\\\cline{2-11}
&  \textbf{4$\cdot 10^{-5}$} &      516.5 &      516.1 &      515.6 &      515.6 &      515.8 &      516.5 &      516.8 &      517.0 &      517.2\\\cline{2-11}
&  \textbf{5$\cdot 10^{-5}$} &      517.8 &      517.7 &      517.9 &      517.4 &      517.5 &      517.6 &      518.2 &      518.4 &      518.5\\\cline{2-11}
&  \textbf{6$\cdot 10^{-5}$} &      517.8 &      517.7 &      517.9 &      517.4 &      517.5 &      517.6 &      518.2 &      518.4 &      518.5\\\cline{2-11}
&  \textbf{7$\cdot 10^{-5}$} &      517.8 &      517.7 &      517.9 &      517.4 &      517.5 &      517.6 &      518.2 &      518.4 &      518.5\\\cline{2-11}
&  \textbf{8$\cdot 10^{-5}$} &      517.8 &      517.7 &      517.9 &      517.4 &      517.5 &      517.6 &      518.2 &      518.4 &      518.5\\\cline{2-11}
&  \textbf{9$\cdot 10^{-5}$} &      517.4 &      517.5 &      517.7 &      517.2 &      517.3 &      517.4 &      518.0 &      518.2 &      518.3\\\hline
    \end{tabular}
        \caption{Energy costs for assigning vehicles to complete daily transit schedules using the greedy algorithm with different wait-time factors $\alpha_\text{electric}$ and $\alpha_\text{gas}$ for electric and liquid-fuel buses, respectively. Each value is the average of assignments for 6 different sample instances of 5 bus lines serving 10 trips each.}
    \label{fig:weights_small}
\end{table*}
\end{revision}

Here, we present numerical results supporting the choice of parameters values for \cref{algo:energy_cost,algo:greedy_approach,algo:mutation,algo:sti_mul_anneal} (see first paragraph of \cref{sec:results}).
We use a grid search to tune the wait-time factor ($\alpha$) parameter of the greedy algorithm for both liquid-fuel and electric buses. Note that we use different parameters for different vehicle types since the optimal parameters are different. We explored a range of values from $10^{-6}$ to $10^{6}$ for liquid-fuel buses and electric buses.
\cref{fig:weights_tuning} shows the average energy costs of assigning vehicles to complete daily transit schedules with different parameter combinations.  Each value is the average of assignments over 6 different sample days. Note that we included only values around the optima for ease of presentation.
For smaller problem instances, we explored a range of values from $10^{-15}$ to $10^{10}$  for both liquid-fuel buses and electric buses. \cref{fig:weights_small} shows the average energy costs of assigning vehicles to sample instances serving 5 bus-lines with 10 trips per bus-line with different parameter combinations. Each line in the figures is the average of 5 runs of the simulated annealing algorithms for 6 different random sample instances.

\pgfplotstableread[col sep=comma,]{data/add/p_values/1_9_5_sim_anneal_short.csv}\pointone
\pgfplotstableread[col sep=comma,]{data/add/p_values/2_9_5_sim_anneal_short.csv}\pointtwo
\pgfplotstableread[col sep=comma,]{data/add/p_values/3_9_5_sim_anneal_short.csv}\pointthree
\pgfplotstableread[col sep=comma,]{data/add/p_values/4_9_5_sim_anneal_short.csv}\pointfour
\pgfplotstableread[col sep=comma,]{data/add/p_values/5_9_5_sim_anneal_short.csv}\pointfive

\begin{revision}
\begin{figure}[!ht]
\begin{tikzpicture}[font=\footnotesize]
\begin{axis}[
    legend style={at={(0.5,-0.37)},anchor=north},
	grid=major,
	xlabel=Number of Iterations,
	ylabel={Energy Cost [\$]},
	width=\columnwidth,
	legend columns=3,
	height=6cm]

\addplot [color=black, no marks, x=cycle_count, y=energy_cost] table\pointone;
\addlegendentry{0.1}
\addplot [color=blue, no marks,x=cycle_count, y=energy_cost] table\pointtwo;
\addlegendentry{0.2}
\addplot [color=green, no marks,x=cycle_count, y=energy_Cost] table {\pointthree};
\addlegendentry{0.3}
\addplot [color=red, no marks,x=cycle_count, y=energy_Cost] table {\pointfour};
\addlegendentry{0.4}
\addplot [color=orange, no marks, x=cycle_count, y=energy_cost] table\pointfive;
\addlegendentry{0.5}
\end{axis}
\end{tikzpicture}
\caption{Energy costs for assignments using simulated annealing with different $p_\text{start}$ values (with fixed $p_\text{end}$ = 0.09 and $p_\text{swap}$ = 0.05). Each value is the average of assignments for 5 different sample days using the full fleet of the agency.}
\label{fig:p_start_tuning}
\end{figure}
\end{revision}
\pgfplotstableread[col sep=comma,]{data/add/p_values/2_1_5_sim_anneal_short.csv}\pointone
\pgfplotstableread[col sep=comma,]{data/add/p_values/2_2_5_sim_anneal_short.csv}\pointtwo
\pgfplotstableread[col sep=comma,]{data/add/p_values/2_3_5_sim_anneal_short.csv}\pointthree
\pgfplotstableread[col sep=comma,]{data/add/p_values/2_4_5_sim_anneal_short.csv}\pointfour
\pgfplotstableread[col sep=comma,]{data/add/p_values/2_5_5_sim_anneal_short.csv}\pointfive
\pgfplotstableread[col sep=comma,]{data/add/p_values/2_6_5_sim_anneal_short.csv}\pointsix
\pgfplotstableread[col sep=comma,]{data/add/p_values/2_7_5_sim_anneal_short.csv}\pointseven
\pgfplotstableread[col sep=comma,]{data/add/p_values/2_8_5_sim_anneal_short.csv}\pointeight
\pgfplotstableread[col sep=comma,]{data/add/p_values/2_9_5_sim_anneal_short.csv}\pointnine

\begin{revision}
\begin{figure}[!ht]
\begin{tikzpicture}[font=\footnotesize]
\begin{axis}[
    legend style={at={(0.5,-0.37)},anchor=north},
	grid=major,
	xlabel=Number of Iterations,
	ylabel={Energy Cost [\$]},
	width=\columnwidth,
	legend columns=3,
	height=6cm]

\addplot [color=black, no marks, x=cycle_count, y=energy_cost] table\pointone;
\addlegendentry{0.01}
\addplot [color=blue, no marks,x=cycle_count, y=energy_cost] table\pointtwo;
\addlegendentry{0.02}
\addplot [color=green, no marks,x=cycle_count, y=energy_Cost] table {\pointthree};
\addlegendentry{0.03}
\addplot [color=red, no marks,x=cycle_count, y=energy_Cost] table {\pointfour};
\addlegendentry{0.04}
\addplot [color=orange, no marks, x=cycle_count, y=energy_cost] table\pointfive;
\addlegendentry{0.05}
\addplot [color=cyan, no marks,x=cycle_count, y=energy_cost] table\pointsix;
\addlegendentry{0.06}
\addplot [color=pink, no marks,x=cycle_count, y=energy_Cost] table {\pointseven};
\addlegendentry{0.07}
\addplot [color=magenta, no marks,x=cycle_count, y=energy_Cost] table {\pointeight};
\addlegendentry{0.08}
\addplot [color=purple, no marks,x=cycle_count, y=energy_Cost] table {\pointnine};
\addlegendentry{0.09}
\end{axis}
\end{tikzpicture}
\caption{Energy costs for assignments using simulated annealing with different $p_\text{end}$ values (with fixed $p_\text{start}$ = 0.2 and $p_\text{swap}$ = 0.05). Each value is the average of assignments for 5 different sample days using the full fleet of the agency.}
\label{fig:p_end_tuning}
\end{figure}
\end{revision}
\pgfplotstableread[col sep=comma,]{data/add/p_values/2_9_1_sim_anneal_short.csv}\pointone
\pgfplotstableread[col sep=comma,]{data/add/p_values/2_9_2_sim_anneal_short.csv}\pointtwo
\pgfplotstableread[col sep=comma,]{data/add/p_values/2_9_3_sim_anneal_short.csv}\pointthree
\pgfplotstableread[col sep=comma,]{data/add/p_values/2_9_4_sim_anneal_short.csv}\pointfour
\pgfplotstableread[col sep=comma,]{data/add/p_values/2_9_5_sim_anneal_short.csv}\pointfive

\begin{revision}

\begin{figure}[!ht]
\begin{tikzpicture}[font=\footnotesize]
\begin{axis}[
    legend style={at={(0.5,-0.37)},anchor=north},
	grid=major,
	xlabel=Number of Iterations,
	ylabel={Energy Cost [\$]},
	width=\columnwidth,
	legend columns=3,
	height=6cm]

\addplot [color=black, no marks, x=cycle_count, y=energy_cost] table\pointone;
\addlegendentry{0.01}
\addplot [color=blue, no marks,x=cycle_count, y=energy_cost] table\pointtwo;
\addlegendentry{0.02}
\addplot [color=green, no marks,x=cycle_count, y=energy_Cost] table {\pointthree};
\addlegendentry{0.03}
\addplot [color=red, no marks,x=cycle_count, y=energy_Cost] table {\pointfour};
\addlegendentry{0.04}
\addplot [color=orange, no marks, x=cycle_count, y=energy_cost] table\pointfive;
\addlegendentry{0.05}
\end{axis}
\end{tikzpicture}
\caption{Energy costs for assignments using simulated annealing with different $p_\text{swap}$ values (with fixed $p_\text{start}$ = 0.2 and $p_\text{swap}$ = 0.05). Each value is the average of assignments for 5 different sample days using the full fleet of the agency.}
\label{fig:p_swap_tuning}
\end{figure}

\end{revision}

\pgfplotstableread[col sep=comma,]{data/add/p_s_values/1_7_3_sim_anneal_short.csv}\pointone
\pgfplotstableread[col sep=comma,]{data/add/p_s_values/2_7_3_sim_anneal_short.csv}\pointtwo
\pgfplotstableread[col sep=comma,]{data/add/p_s_values/3_7_3_sim_anneal_short.csv}\pointthree
\pgfplotstableread[col sep=comma,]{data/add/p_s_values/4_7_3_sim_anneal_short.csv}\pointfour
\pgfplotstableread[col sep=comma,]{data/add/p_s_values/5_7_3_sim_anneal_short.csv}\pointfive
\pgfplotstableread[col sep=comma,]{data/add/p_s_values/6_7_3_sim_anneal_short.csv}\pointsix
\pgfplotstableread[col sep=comma,]{data/add/p_s_values/7_7_3_sim_anneal_short.csv}\pointseven
\pgfplotstableread[col sep=comma,]{data/add/p_s_values/8_7_3_sim_anneal_short.csv}\pointeight
\pgfplotstableread[col sep=comma,]{data/add/p_s_values/9_7_3_sim_anneal_short.csv}\pointnine

\begin{revision}

\begin{figure}[!ht]
\begin{tikzpicture}[font=\footnotesize]
\begin{axis}[
    legend style={at={(0.5,-0.37)},anchor=north},
	grid=major,
	xlabel=Number of Iterations,
	ylabel={Energy Cost [\$]},
	width=\columnwidth,
	legend columns=3,
	height=6cm]

\addplot [color=black, no marks, x=cycle_count, y=energy_cost] table\pointone;
\addlegendentry{0.1}
\addplot [color=blue, no marks,x=cycle_count, y=energy_cost] table\pointtwo;
\addlegendentry{0.2}
\addplot [color=green, no marks,x=cycle_count, y=energy_Cost] table {\pointthree};
\addlegendentry{0.3}
\addplot [color=red, no marks,x=cycle_count, y=energy_Cost] table {\pointfour};
\addlegendentry{0.4}
\addplot [color=orange, no marks, x=cycle_count, y=energy_cost] table\pointfive;
\addlegendentry{0.5}
\addplot [color=cyan, no marks,x=cycle_count, y=energy_cost] table\pointsix;
\addlegendentry{0.6}
\addplot [color=pink, no marks,x=cycle_count, y=energy_Cost] table {\pointseven};
\addlegendentry{0.7}
\addplot [color=magenta, no marks,x=cycle_count, y=energy_Cost] table {\pointeight};
\addlegendentry{0.8}
\addplot [color=purple, no marks,x=cycle_count, y=energy_Cost] table {\pointnine};
\addlegendentry{0.9}
\end{axis}
\end{tikzpicture}
\caption{Energy costs for assignments using simulated annealing with different $p_\text{start}$ values (with fixed $p_\text{end}$ = 0.07 and $p_\text{swap}$ = 0.03). Each value is the average of assignments for 6 different sample instances of 5 bus lines serving 10 trips each.}
\label{fig:p_start_tuning_each}
\end{figure}

\end{revision}
\pgfplotstableread[col sep=comma,]{data/add/p_s_values/1_1_3_sim_anneal_short.csv}\pointone
\pgfplotstableread[col sep=comma,]{data/add/p_s_values/1_2_3_sim_anneal_short.csv}\pointtwo
\pgfplotstableread[col sep=comma,]{data/add/p_s_values/1_3_3_sim_anneal_short.csv}\pointthree
\pgfplotstableread[col sep=comma,]{data/add/p_s_values/1_4_3_sim_anneal_short.csv}\pointfour
\pgfplotstableread[col sep=comma,]{data/add/p_s_values/1_5_3_sim_anneal_short.csv}\pointfive
\pgfplotstableread[col sep=comma,]{data/add/p_s_values/1_6_3_sim_anneal_short.csv}\pointsix
\pgfplotstableread[col sep=comma,]{data/add/p_s_values/1_7_3_sim_anneal_short.csv}\pointseven
\pgfplotstableread[col sep=comma,]{data/add/p_s_values/1_8_3_sim_anneal_short.csv}\pointeight
\pgfplotstableread[col sep=comma,]{data/add/p_s_values/1_9_3_sim_anneal_short.csv}\pointnine

\begin{revision}

\begin{figure}[!ht]
\begin{tikzpicture}[font=\footnotesize]
\begin{axis}[
    legend style={at={(0.5,-0.37)},anchor=north},
	grid=major,
	xlabel=Number of Iterations,
	ylabel={Energy Cost [\$]},
	width=\columnwidth,
	legend columns=3,
	height=6cm]

\addplot [color=black, no marks, x=cycle_count, y=energy_cost] table\pointone;
\addlegendentry{0.01}
\addplot [color=blue, no marks,x=cycle_count, y=energy_cost] table\pointtwo;
\addlegendentry{0.02}
\addplot [color=green, no marks,x=cycle_count, y=energy_Cost] table {\pointthree};
\addlegendentry{0.03}
\addplot [color=red, no marks,x=cycle_count, y=energy_Cost] table {\pointfour};
\addlegendentry{0.04}
\addplot [color=orange, no marks, x=cycle_count, y=energy_cost] table\pointfive;
\addlegendentry{0.05}
\addplot [color=cyan, no marks,x=cycle_count, y=energy_cost] table\pointsix;
\addlegendentry{0.06}
\addplot [color=pink, no marks,x=cycle_count, y=energy_Cost] table {\pointseven};
\addlegendentry{0.07}
\addplot [color=magenta, no marks,x=cycle_count, y=energy_Cost] table {\pointeight};
\addlegendentry{0.08}
\addplot [color=purple, no marks,x=cycle_count, y=energy_Cost] table {\pointnine};
\addlegendentry{0.09}
\end{axis}
\end{tikzpicture}
\caption{Energy costs for assignments using simulated annealing with different $p_\text{end}$ values (with fixed $p_\text{start}$ = 0.1 and $p_\text{swap}$ = 0.03). Each value is the average of assignments for 6 different sample instances of 5 bus lines serving 10 trips each.}
\label{fig:p_end_tuning_each}
\end{figure}

\end{revision}
\pgfplotstableread[col sep=comma,]{data/add/p_s_values/1_7_1_sim_anneal_short.csv}\pointone
\pgfplotstableread[col sep=comma,]{data/add/p_s_values/1_7_2_sim_anneal_short.csv}\pointtwo
\pgfplotstableread[col sep=comma,]{data/add/p_s_values/1_7_3_sim_anneal_short.csv}\pointthree
\pgfplotstableread[col sep=comma,]{data/add/p_s_values/1_7_4_sim_anneal_short.csv}\pointfour
\pgfplotstableread[col sep=comma,]{data/add/p_s_values/1_7_5_sim_anneal_short.csv}\pointfive
\pgfplotstableread[col sep=comma,]{data/add/p_s_values/1_7_6_sim_anneal_short.csv}\pointsix
\pgfplotstableread[col sep=comma,]{data/add/p_s_values/1_7_7_sim_anneal_short.csv}\pointseven
\pgfplotstableread[col sep=comma,]{data/add/p_s_values/1_7_8_sim_anneal_short.csv}\pointeight
\pgfplotstableread[col sep=comma,]{data/add/p_s_values/1_7_9_sim_anneal_short.csv}\pointnine
\begin{revision}

\begin{figure}[!ht]
\begin{tikzpicture}[font=\footnotesize]
\begin{axis}[
    legend style={at={(0.5,-0.37)},anchor=north},
	grid=major,
	xlabel=Number of Iterations,
	ylabel={Energy Cost [\$]},
	width=\columnwidth,
	legend columns=3,
	height=6cm]

\addplot [color=black, no marks, x=cycle_count, y=energy_cost] table\pointone;
\addlegendentry{0.01}
\addplot [color=blue, no marks,x=cycle_count, y=energy_cost] table\pointtwo;
\addlegendentry{0.02}
\addplot [color=green, no marks,x=cycle_count, y=energy_Cost] table {\pointthree};
\addlegendentry{0.03}
\addplot [color=red, no marks,x=cycle_count, y=energy_Cost] table {\pointfour};
\addlegendentry{0.04}
\addplot [color=orange, no marks, x=cycle_count, y=energy_cost] table\pointfive;
\addlegendentry{0.05}
\addplot [color=cyan, no marks,x=cycle_count, y=energy_cost] table\pointsix;
\addlegendentry{0.06}
\addplot [color=pink, no marks,x=cycle_count, y=energy_Cost] table {\pointseven};
\addlegendentry{0.07}
\addplot [color=magenta, no marks,x=cycle_count, y=energy_Cost] table {\pointeight};
\addlegendentry{0.08}
\addplot [color=purple, no marks,x=cycle_count, y=energy_Cost] table {\pointnine};
\addlegendentry{0.09}
\end{axis}
\end{tikzpicture}
\caption{Energy costs for assignments using simulated annealing with different $p_\text{swap}$ values (with fixed $p_\text{start}$ = 0.1 and $p_\text{end}$ = 0.07). Each value is the average of assignments for 6 different sample instances of 5 bus lines serving 10 trips each.}
\label{fig:p_swap_tuning_each}
\end{figure}

\end{revision}

Similarly, we exhaustively searched the parameters space of $p_\text{start}$ for a range of values from 0.1 to 0.5, $p_\text{end}$ for a range of values from 0.01 to 0.09, $p_\text{swap}$ for a range of values from 0.01 to 0.05. \cref{fig:p_start_tuning} shows how the energy cost evolves over the iterations of the simulated annealing algorithm with  various $p_\text{start}$ values.
Each line is the average of 5 runs of the simulated annealing algorithms for 5 different sample days. Note that these results also show that with the right parameters, simulated annealing converges in around 40,000 iterations; hence, it suffices to use $k_{max} = $ 50,000 for our numerical results. For smaller problem instances, we exhaustively searched the parameters space of $p_\text{start}$ for a range of values from 0.1 to 0.9, $p_\text{end}$ for a range of values from 0.01 to 0.9, $p_\text{swap}$ for a range of values from 0.01 to 0.09. \cref{fig:p_start_tuning_each} shows how the energy cost evolves over the iterations of the simulated annealing algorithm with  various $p_\text{start}$ values. \cref{fig:p_end_tuning_each} shows how the energy cost evolves over the iterations of the simulated annealing algorithm with  various $p_\text{end}$ values.
\cref{fig:p_swap_tuning_each} shows how the energy cost evolves over the iterations of the simulated annealing algorithm with  various $p_\text{swap}$ values.
Each line in the figures is the average of 5 runs of the simulated annealing algorithms for 5 different random sample instances serving 5 bus-lines with 10 trips per bus-line. Same as the complete daily schedule, these results also show that with the right parameters, simulated annealing converges in around 40,000 iterations.
\end{revision}

\newpage
\paragraph{Length of Time Slots}
\pgfplotstableread[col sep=comma,]{data/add/slot_duration/slot_duration.csv}\intprog

\begin{figure}[ht]
\begin{tikzpicture}[font=\footnotesize]
\begin{axis}[
    legend style={at={(0.5,-0.37)},anchor=north},
    ymax=50,
    ymin=10,
	grid=major,
	xlabel={Length of Time Slot [hours]},
	ylabel={Energy Cost [\$]},
	width=\columnwidth,
	legend columns=2,
	height=4cm]
\addplot [color=magenta, mark=*, select coords between index={0}{2}] table [x=slot_duration, y=ip_cost,] {\intprog};
\addplot [color=black, mark=x, select coords between index={3}{5}] table [x=slot_duration, y=ip_cost,] {\intprog};
\legend{1 Bus Line, 2 Bus Lines}
\end{axis}
\end{tikzpicture}
\caption{Energy costs for assignments using the integer program with different time-slot lengths.}
\label{fig:time_slot}
\end{figure}
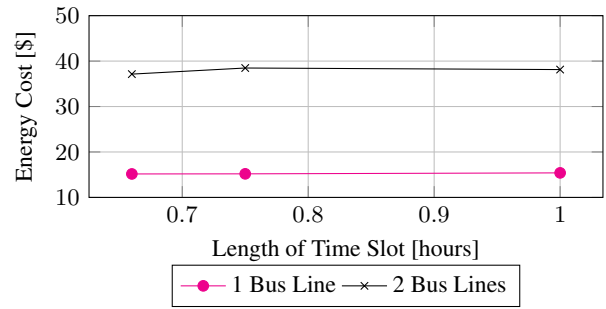

In \cref{sec:model}, we discretized charging schedules into uniform-length time slots for the sake of computational tractability.
Now, we study whether discretizing time has a significant impact on solution quality by comparing assignments with various time-slot lengths. 
Since the integer program can find optimal solutions for small instances, we study the impact of time-slot lengths using the integer program for  1 or 2 bus lines with 10 trips for each line.
\cref{fig:time_slot} shows energy costs for various settings.
Each plotted value is the average taken over 3 different sample instances.
The figure shows that the loss in solution quality is very small even with longer time slots, such as $1$ hour.

\fi

\end{document}